\newtheorem{definition}{Definition}
\newtheorem{assumption}{Assumption}
\newtheorem{estimator}{Estimator}
\newtheorem{lemma}{Lemma}
\DeclareMathOperator*{\argmax}{argmax}
\algnewcommand{\LineComment}[1]{\Statex \hskip\ALG@thistlm \textcolor{blue}{\(\triangleright\) #1}}
\algnewcommand{\FirstLineComment}[1]{\Statex \hskip\ALG@tlm \textcolor{blue}{\(\triangleright\) #1}}
\algnewcommand{\InlineComment}[1]{\hfill\textcolor{blue}{\(\triangleright\) #1}}
\crefname{section}{\S}{\S\S}
\Crefname{section}{\S}{\S\S}
\crefname{figure}{Fig.}{Fig.}
\crefname{alg}{Alg.}{Alg.}
\crefname{line}{line}{lines}
\crefname{appendix}{App.}{App.}
\crefname{equation}{eq.}{eqs.}
\crefname{table}{Table}{Tables}
\crefname{proposition}{Proposition}{Propositions}
\crefname{assumption}{Assump.}{Assumps.}
\crefname{lemma}{Lemma}{Lemmas}
\crefname{definition}{Defn.}{Defns.}
\crefname{estimator}{Estimator}{Estimators}
\crefname{theorem}{Theorem}{Theorems}
\crefname{thm}{Theorem}{Theorems}
\newcommand\cincludegraphics[2][]{\raisebox{-0.3\height}{\includegraphics[#1]{#2}}}
\newcommand{\q}[2]{\qty[mode=math]{#1}{#2}\xspace}
\DeclareSIUnit[quantity-product = {}, reset-math-version = false]\thousand{k}
\DeclareSIUnit[quantity-product = {}, reset-math-version = false]\million{M}
\DeclareSIUnit[quantity-product = {}, reset-math-version = false]\billion{B}
\DeclareSIUnit[quantity-product = {}, reset-math-version = false]\trillion{T}
\DeclareSIUnit[quantity-product = {}, reset-math-version = false]\x{x}
\DeclareSIUnit[quantity-product = {}, reset-math-version = false]\percent{\%}
\DeclareSIUnit[quantity-product = {}, reset-math-version = false]\hour{h}
\DeclareSIUnit[quantity-product = {}, reset-math-version = false]\min{m}
\DeclareSIUnit[quantity-product = {}, reset-math-version = false]\sec{s}
\newcommand{\integer}[1]{%
    \num[
        mode = math,
        round-mode=places,
        round-precision=0,
        group-separator={,},
        group-minimum-digits=4,
    ]{#1}%
    \xspace
}
\newcommand{\float}[2][1]{%
    \num[
        group-digits=false,
        round-precision=#1,
        round-mode=places,
    ]{#2}%
    \xspace}
\DeclareRobustCommand*{\escapeus}[1]{%
    \begingroup\@activeus\scantokens{#1 }\endgroup}
\newcommand{\makesf}[1]{\textsf{{\escapeus{#1}}}}
\newcommand{\defn}[1]{\textbf{#1}}
\newcommand*{\circled}[1]{\tikz[baseline=(char.base)]{
        \node[shape=circle,draw,inner sep=1pt] (char) {\normalfont{\small #1}};}}
\newcommand{\instancecolor}{ForestGreen}
\newcommand{\cohortcolor}{orange}
\newcommand{\checkpointcolor}{purple}
\newcommand{\size}[1]{{\vert #1 \vert}}
\newcommand{\perffn}{\gamma}
\newcommand{\permutationfn}{\sigma}
\newcommand\mem{\tau}
\newcommand{\memorisationinstance}{\mem_{\xdatainstance, \checkpointtime}}
\newcommand{\memorisationinstancelastcheckpoint}{\mem_{\xdatainstance, \scaleto{\lastcheckpointtime}{4pt}}}
\newcommand{\memorisationcohort}{\mem_{\cohorttime, \checkpointtime}}
\newcommand{\memorisationarchitecture}{\mem_{\xdatainstance,\scaleto{p(\trainingvar)}{6pt}}^{\mathtt{arch}}}
\newcommand{\memorisationarchitectureestimator}{\widehat{\mem}_{\xdatainstance,\scaleto{p(\trainingvar)}{6pt}}^{\mathtt{arch}}}
\newcommand{\vocab}{\mathcal{V}}
\DeclareMathOperator*{\expect}{\mathbb{E}}
\newcommand{\vx}{{\color{\instancecolor} \boldsymbol{x}}}
\newcommand{\vtheta}{\boldsymbol{\theta}}
\newcommand{\ptheta}{p_{\scaleto{\vtheta}{4pt}}}
\newcommand{\dataset}{\mathcal{D}}
\newcommand{\R}{\mathbb{R}}
\newcommand{\batch}{\mathcal{B}}
\newcommand{\loss}{\mathcal{L}}
\newcommand{\defeq}[0]{\mathrel{\stackrel{\textnormal{\tiny def}}{=}}}
\newcommand{\datainstance}{n}
\newcommand{\xdatainstance}{\vx}
\newcommand{\smalldots}{...}
\newcommand{\cohorttime}{{\color{\cohortcolor}g}}
\newcommand{\Cohorttime}{G}
\newcommand{\checkpointtime}{{\color{\checkpointcolor}c}}
\newcommand{\checkpointtimesec}{{\color{\checkpointcolor}c'}}
\newcommand{\lastcheckpointtime}{{\color{\checkpointcolor}T}}
\newcommand{\batchsize}{B}
\newcommand{\cohortcoloredt}{{\color{\cohortcolor}t}}
\newcommand{\checkpointcoloredt}{{\color{\checkpointcolor}t}}
\newcommand{\trainingvar}{\boldsymbol{\psi}}
\newcommand{\yestimator}{\overline{Y}}
\newcommand{\didestimand}{\memorisationcohort^{\mathtt{did}}}
\newcommand{\diffestimand}{\memorisationcohort^{\mathtt{diff}}}
\newcommand{\didestimator}{\widehat{\mem}_{\cohorttime, \checkpointtime}^{\mathtt{did}}}
\newcommand{\diffestimator}{\widehat{\mem}_{\cohorttime, \checkpointtime}^{\mathtt{diff}}}
\newcommand{\macrobatch}{\mathcal{G}}
\newcommand{\notseen}{{\color{\cohortcolor} \infty}}
\newcommand{\pthetacheckpoint}{p_{\scaleto{\vtheta_{\checkpointtime}}{4pt}}}
\newcommand{\mathcomment}[1]{\text{{\color{gray} #1}}}
\newcommand{\lagcohorttime}{\mathop{{\color{\checkpointcolor}\cohorttime-1}}}
\newcommand{\cohort}{macro-batch\xspace}
\newcommand{\cohorts}{macro-batches\xspace}
\newcommand{\did}{DiD\xspace}
\newcommand{\kl}{\ensuremath{(k,\!\ell)}\xspace}
\newcommand{\klestimand}{\mem_{\xdatainstance, \checkpointtime}^{\mathtt{extr}}}
\newcommand{\klestimator}{\widehat{\mem}_{\xdatainstance, \checkpointtime}^{\mathtt{extr}}}
\newcommand{\memarchitecture}{\mem_{\xdatainstance,\scaleto{p(\trainingvar)}{6pt}}}
\newcommand{\paramswithx}{\Theta_{\cohorttime}}
\newcommand{\paramswithoutx}{\Theta_{\notseen}}
\newcommand{\lastcheckpointtimesmall}{\scaleto{\lastcheckpointtime}{4pt}}
\newcommand{\optvthetawithx}{\vtheta_{\scaleto{\lastcheckpointtime}{4pt}}}
\newcommand{\optvthetawithoutx}{\vtheta_{-\xdatainstance, \scaleto{\lastcheckpointtime}{4pt}}}
\newcommand{\memorisationinfluenceestimator}{\hat{\mem}_{\xdatainstance,\scaleto{\lastcheckpointtime}{4pt}}^{\mathtt{infl}}}
\newcommand{\hessian}{\mathrm{H}}
\newcommand{\Ywithoutx}{Y_{-\xdatainstance, \scaleto{\lastcheckpointtime}{4pt}}}
\newcommand{\papertitle}{Causal Estimation of Memorisation Profiles}
\title{\papertitle}
\newcommand{\camid}{{\includegraphics[scale=0.018]{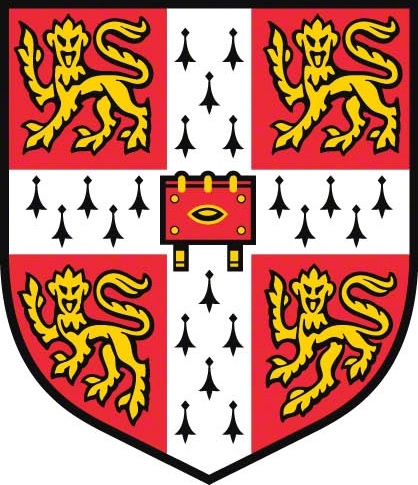}}}
\newcommand{\ethid}{{\includegraphics[scale=0.028]{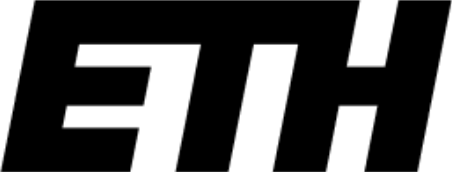}}}
\newcommand{\camemailadress}[1]{\href{mailto:#1@cam.ac.uk}{\makesf{#1}}}
\newcommand{\ethemailadress}[1]{\href{mailto:#1@inf.ethz.ch}{\makesf{#1}}}
\author{Pietro Lesci,\textsuperscript{\camid} Clara Meister,\textsuperscript{\ethid} Thomas Hofmann,\textsuperscript{\ethid} Andreas Vlachos,\textsuperscript{\camid} Tiago Pimentel\textsuperscript{\ethid} \\
  $^\camid$University of Cambridge,~\,~ \textsuperscript{\ethid}ETH Z\"urich \\
  \makesf{\{}\camemailadress{pl487}, \camemailadress{av308}\makesf{\}@cam.ac.uk} \\
  \makesf{\{}\ethemailadress{clara.meister}, \ethemailadress{thomas.hofmann}, \ethemailadress{tiago.pimentel}\makesf{\}@inf.ethz.ch}
}
\begin{document}

% ============================= %
% ===== Title and Authors ===== %
% ============================= %
\maketitle

% ===================== %
% ===== Main body ===== %
% ===================== %
\begin{abstract}
    Understanding memorisation in language models has practical and societal implications, e.g., studying models' training dynamics or preventing copyright infringements.
    Prior work defines memorisation as the causal effect of training with an instance on the model's ability to predict that instance.
    This definition relies on a counterfactual: the ability to observe what would have happened had the model not seen that instance.
    Existing methods struggle to provide computationally efficient and accurate estimates of this counterfactual.
    Further, they often estimate memorisation for a model architecture rather than for a specific model instance.
    This paper fills an important gap in the literature, proposing a new, principled, and efficient method to estimate memorisation based on the difference-in-differences design from econometrics.
    Using this method, we characterise a model's memorisation profile---its memorisation trends across training---by only observing its behaviour on a small set of instances throughout training.
    In experiments with the Pythia model suite, we find that memorisation (i) is stronger and more persistent in larger models, (ii) is determined by data order and learning rate, and (iii) has stable trends across model sizes, thus making memorisation in larger models predictable from smaller ones.
\end{abstract}

% ===========
% Github link
% ===========
\begin{tblr}{colspec = {Q[c,m]|X[l,m]}, stretch = 0}
    \cincludegraphics[width=1.2em, keepaspectratio]{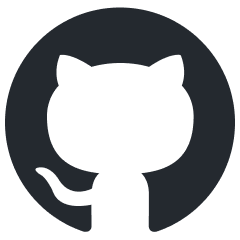}
     & \setstretch{.5}\href{https://github.com/pietrolesci/memorisation-profiles}{{\makesf{pietrolesci/memorisation-profiles}}} \\
\end{tblr}

% ============
% INTRODUCTION
% ============
\section{Introduction}\label{sec:intro}

Large language models (LMs) are often pretrained with a single pass on web-scale datasets \citep[\emph{inter alia}]{raffel-etal-2020-exploring, gao-etal-2020-pile, penedo-etal-2023-refinedweb}.
Given the colossal size of these training sets, one may expect each individual instance to have little impact on the final model.
Yet, LMs can still reproduce entire sequences from their training set verbatim \citep{carlini-etal-2021-extracting}, suggesting that models can store, or \emph{memorise}, precise knowledge about individual training instances.
In the era of large LMs, measuring memorisation is crucial for NLP practitioners; it has implications for copyright and data protection \citep{hu-etal-2022-membership,vyas2023,lee2023}, for how models encode factual information \citep{cao-etal-2022-benign, tirumala-etal-2022-memorization}, and for understanding their training dynamics \cite{arpit2017closer,chang_survey_2024}.

\begin{figure}[!t]
    \centering
    \includegraphics[width=\columnwidth]{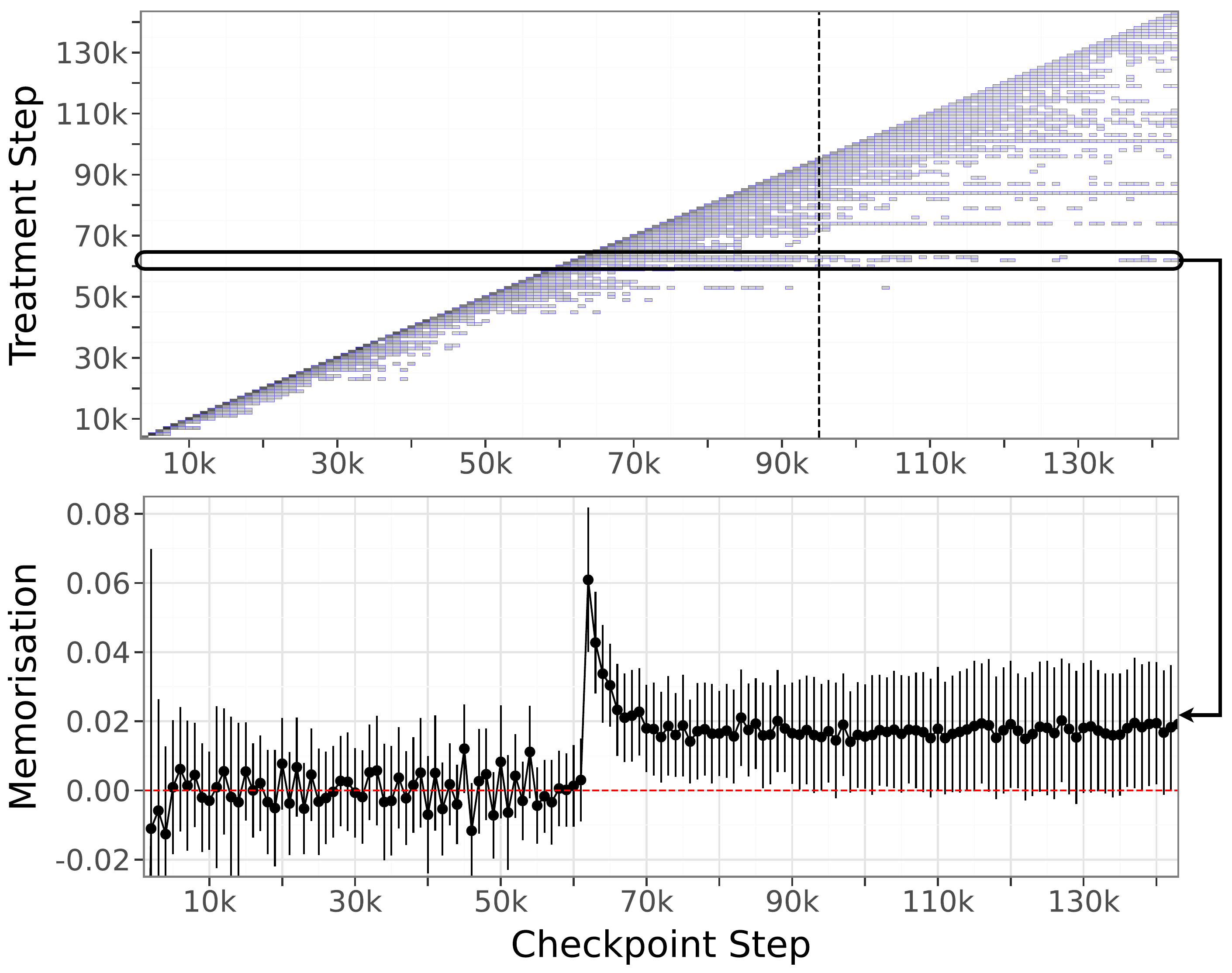}
    \vspace{-15pt}
    \caption{Memorisation profile (top) and path (bottom) of Pythia \q{6.9}{\billion}.
        Each entry represents the expected counterfactual memorisation of instances trained on at a specific timestep (\enquote{Treatment Step}) across model checkpoints (\enquote{Checkpoint Step}). The dashed vertical line indicates the end of the first epoch.}
    \label{fig:memorisation_profile}
\end{figure}

One line of prior work has adopted a causal definition of \defn{memorisation}: it is the causal effect of observing an instance during training on a model's ability to correctly predict that instance \citep{feldman-2020-does}.
Despite being an intuitive concept, quantification of this definition is not straightforward as it requires knowledge of a \emph{counterfactual}:\footnote{Counterfactuals are thought experiments about what would have happened if a present condition were changed while keeping everything else unchanged \citep{mccloskey-2016-counterfactuals}.}
we must know how our model would have performed on a training instance had the model not been trained on it.
To overcome this challenge, prior work has proposed a variety of methods to estimate memorisation.
Some estimate it by training a model multiple times on different subsets of the data \citep[e.g.,][]{feldman-zhang-2020-what,zheng-jiang-2022-empirical}, while others implicitly assume this counterfactual's value to be negligible \citep{carlini-etal-2021-extracting}.
Both these approaches, however, have drawbacks:
the first computes memorisation for an architecture rather than a specific model, while the second relies on a strong assumption (we discuss this in detail in \cref{sec:related_work}).

In this paper, we first formalise \defn{counterfactual memorisation} as the difference between two potential outcomes; notably, our formalisation generalises prior definitions of memorisation, allowing us to compare them within a unified framework.
We then draw from the econometrics literature \citep{callaway-santanna-2021-difference} and propose a new method which estimates memorisation using only observational data;
our method simply needs a model's performance measurements (e.g., log-likelihood) on a subset of the training data throughout training.
The output of our method is what we term a \defn{memorisation profile}: a model's memorisation of training batches over the course of training.

Empirically, we use this method to analyse memorisation for models in the Pythia suite  \citep{biderman-etal-2023-pythia} and characterise their memorisation profiles; e.g.,
\cref{fig:memorisation_profile} reports the memorisation profile of Pythia \q{6.9}{\billion}.
By studying these memorisation profiles we find that memorisation is stronger and more persistent in larger models.
Furthermore, both the learning rate and the position of an instance in the training set considerably impact how strongly that instance is memorised.
Finally, memorisation profiles are stable across model sizes; thus, we can predict memorisation in larger models from the memorisation observed in smaller ones.

% ==========
% BACKGROUND
% ==========
\section{Background}

In this section, we introduce some background on language modelling and on causal analysis which will be required throughout our paper.

\subsection{Language Modelling}\label{sec:langmodel}

We start by providing some background on language modelling.\footnote{We frame our exposition in terms of language models, but our framework can be trivially applied to any neural model and input modality (e.g., images).}
Let $\ptheta(\vx)$ be a language model with parameters $\vtheta \in \R^d$.
This model defines a probability distribution over $\vx \in \vocab^*$, the set of all finite sequences that can be constructed from elements in the alphabet $\vocab$.
To train this model, we start with a set of randomly selected initial parameters, $\vtheta_0$.
We then learn parameters $\vtheta$ using a dataset $\dataset$ and an optimisation procedure defined with respect to a loss function $\loss$.

Specifically, let $\dataset = \{\xdatainstance_{\datainstance}\}_{n=1}^{N}$ be a dataset whose \defn{\color{\instancecolor} instances} $\xdatainstance$ are sequences drawn from a target (unknown) distribution $p(\vx)$.
These instances are typically assumed to be sampled i.i.d., and are shuffled using a permutation function $\permutationfn\colon \{1,\smalldots, N\} \!\to\! \{1,\smalldots, N\}$.
For a given batch size $\batchsize$, we then split this dataset into $T \!\leq\! \lfloor \nicefrac{N}{B} \rfloor$ batches $\batch_{\cohortcoloredt}$.
We iterate through these batches performing gradient updates on the model parameters:
\begin{align}\label{eq:optimisation_process}
    \vtheta_{\checkpointcoloredt} = \vtheta_{{\color{\checkpointcolor} t-1}} - \eta\,
    \nabla_{\vtheta}
    \loss(\vtheta_{{\color{\checkpointcolor} t-1}}, \batch_{\cohortcoloredt})
\end{align}
where $\eta \in \R$ is a learning rate.\footnote{The learning rate can be a function of other quantities \citep[e.g.,][\emph{inter alia}]{adagrad,adam}.}
Notably, this procedure consists of a single pass on the training set and is standard for recent LMs \citep[e.g.,][]{touvron-etal-2023-llama2,jiang-etal-2023-mistral,dey-etal-2023-cerebras}.

At each iteration, we use a batch $\batch_{\cohortcoloredt}$ to obtain a new model checkpoint, $\vtheta_{\checkpointcoloredt}$.
We introduce a notation which distinguishes the indexing of checkpoints and batches.
We use $\checkpointtime \in \{0, 1, \smalldots, T\}$ to denote a \defn{\color{\checkpointcolor} checkpoint step} (e.g., $\vtheta_\checkpointtime$). Further, we use $\cohorttime \in \{1, \smalldots, T\} \cup \{\infty\}$ to denote the timestep at which a batch is used for training (e.g., $\batch_\cohorttime$); we term this a \defn{\color{\cohortcolor} treatment step}, borrowing this terminology from the econometrics literature.
We denote as $\cohorttime\!=\!\infty$ a batch composed of instances that are not used for training and which form a validation set.

\subsection{Causal Analysis}\label{sec:causal_analysis}

Causal estimation is typically split into three steps.
First, we define a \defn{causal estimand}, the target quantity we want to estimate.
Second, we state the assumptions needed to rewrite this causal estimand in terms of observable data, thus defining a \defn{statistical estimand}; this process is called identification.
Finally, we define an \defn{estimator}, a statistical procedure to approximate the statistical estimand.

To formally define memorisation as a causal estimand, we will use the \defn{potential outcomes} framework of \citet{rubin-1974-estimating, rubin-2005-causal}.\footnote{For a comparison of causal frameworks, see \citet{ibeling-and-icard-2023-comparing}. For an introduction to causal inference, see \citet{pearl-2009-causality} and \citet{imbens-rubin-2015-causal}.}
This framework allows us to formally describe the causal effect of an intervention, or \defn{treatment}, on some target quantity, or \defn{outcome}.
In \cref{sec:intro}, we defined memorisation as the causal effect of \emph{training on} an instance on a model’s ability to \emph{predict} it.
Thus, the act of training on $\xdatainstance$ defines the treatment, while the model's ability to predict an instance defines the outcome.

Since training is performed iteratively over batches, instances are treated at different timesteps.
Thus, we use a \defn{treatment assignment variable} $\Cohorttime(\xdatainstance)$ to denote the step $\cohorttime$ an instance is trained on.
Further, to quantify the ability of a model with parameters $\vtheta_\checkpointtime$ to predict $\xdatainstance$, we use a performance function $\perffn$.
We then define the \defn{outcome variable} as $Y_\checkpointtime(\xdatainstance) \defeq \perffn(\vtheta_{\checkpointtime}, \xdatainstance)$, and, unless noted otherwise, we set this performance function to be the log-likelihood of $\vx$ under $\pthetacheckpoint$: $\perffn(\vtheta_{\checkpointtime}, \xdatainstance) = \log \pthetacheckpoint(\xdatainstance)$.\footnote{We experiment with other functions in the appendix.}

To define memorisation we need to represent both observed---i.e., $Y_\checkpointtime(\xdatainstance)$---and counterfactual outcomes---i.e., the performance of the model on $\xdatainstance$ had we not trained on it.
The potential outcomes notation \citep{splawa1990application} enables us to represent both types of outcomes consistently.

\begin{definition}
    The \defn{potential outcome} of an instance $\xdatainstance$ at timestep $\checkpointtime$ under treatment assignment $\cohorttime$, denoted as $Y_\checkpointtime(\xdatainstance;\cohorttime)$, is the value that the outcome would have taken if $\Cohorttime(\xdatainstance)$ was equal to $\cohorttime$.
\end{definition}

Since we only observe a single permutation of the data, we only see one specific treatment step for each instance, i.e., $\Cohorttime(\xdatainstance)$.
Thus, the potential outcome of an instance is observed only for the actual treatment assignment $\cohorttime\!=\!\Cohorttime(\xdatainstance)$.
In this case, we can equate potential and observed outcomes, that is $Y_{\checkpointtime}(\xdatainstance; \cohorttime) = Y_{\checkpointtime}(\xdatainstance)$, a property called consistency \citep{cole-and-frangakis-2009-consistency}.
For any other treatment step $\cohorttime \mathop{\neq} \Cohorttime(\xdatainstance)$, the potential outcome is counterfactual and, thus, unobservable from the data.\looseness=-1

% ===========================
% COUNTERFACTUAL MEMORISATION
% ===========================
\section{Counterfactual Memorisation}\label{sec:mem_definitions}

Intuitively, counterfactual memorisation can be understood as the answer to the question: how would the model's performance on instance $\xdatainstance$ at timestep $\checkpointtime$ be different if we had not trained on it at timestep $\cohorttime$?
Using the potential outcomes notation, we formalise this definition as follows.

\begin{definition}\label{defn:instance_memorisation}
    \defn{Counterfactual memorisation} is the causal effect of using instance $\xdatainstance$ for training at the observed timestep $\Cohorttime(\xdatainstance)\mathop{=}\cohorttime$ on the model's performance on this same instance at timestep $\checkpointtime$:
    \begin{align}\label{eq:instance_memorisation}
        \memorisationinstance\,\,\, \defeq \underbrace{Y_{\checkpointtime}(\xdatainstance; \cohorttime)}_{\substack{\text{performance on $\xdatainstance$} \\ \text{when trained with $\xdatainstance$}}} - \underbrace{Y_{\checkpointtime}(\xdatainstance; \notseen)}_{\substack{\text{performance on $\xdatainstance$}\\ \text{when not trained with $\xdatainstance$}}} \!\!\!\!
    \end{align}
\end{definition}

In econometrics, \cref{eq:instance_memorisation} is called an individual treatment effect (ITE).
Notably, the first potential outcome in this equation, $Y_{\checkpointtime}(\xdatainstance; \cohorttime)$, can be observed from the data since, by definition, we trained on $\xdatainstance$ at timestep $\Cohorttime(\xdatainstance)\mathop{=}\cohorttime$.
However, the second term, $Y_{\checkpointtime}(\xdatainstance; \notseen)$, is counterfactual.
To compute the ITE, we would need to estimate this counterfactual outcome for a specific instance, which is challenging due to unobserved factors and heterogeneity\footnote{I.e.,\ non-random variability across instances.} \citep{lu-etal-2018-estimating}.
While we would ideally estimate memorisation at the instance level, we focus on average effects instead, as is common in the econometrics literature \citep{angrist-pischke-2015-mastering}.

\begin{definition}\label{defn:expect_memorisation}
    \defn{Expected counterfactual memorisation} is the average causal effect of using instances for training at timestep $\cohorttime$ on the model's performance on these same instances at timestep $\checkpointtime$:\footnote{In econometrics, \cref{eq:expect_memorisation} is called an average treatment effect on the treated (ATT), as it is defined in terms of an expectation over $\xdatainstance \sim p(\vx \mathop{\mid} \Cohorttime(\vx) \mathop{=}\cohorttime)$.
        In other words, this expectation is taken with respect to the instance distribution conditioned on it being selected for training at step $\cohorttime$.
        Assuming the training set is sampled i.i.d.\ and that its permutation is random (as discussed in \cref{sec:langmodel}), then $p(\vx \mathop{\mid} \Cohorttime(\vx) \mathop{=} \cohorttime) = p(\vx)$.
        Given these assumptions, \cref{eq:expect_memorisation} would also be an average treatment effect (ATE), which would allow us to make causal claims about the entire population.}
    \begin{align}\label{eq:expect_memorisation}
        \memorisationcohort \defeq \expect_{\xdatainstance}
        \Big[Y_{\checkpointtime}(\xdatainstance; \cohorttime)
            - Y_{\checkpointtime}(\xdatainstance; \notseen) \mid \Cohorttime(\xdatainstance) \mathop{=} \cohorttime\Big]
    \end{align}
\end{definition}

Together, the $\memorisationcohort$ form a matrix which we term the model's \defn{memorisation profile};
each row therein is the \defn{memorisation path} of a batch.
Memorisation profiles and paths allow us to analyse a model's memorisation patterns across different treatment and checkpoint steps.
Notably, there cannot be memorisation whenever $\checkpointtime < \cohorttime$, as the instances have not been seen by the model yet, so  $\memorisationcohort = 0$ in those cases.
We term $\memorisationcohort$ as \defn{instantaneous memorisation} when $\checkpointtime=\cohorttime$, as \defn{persistent memorisation} when $\checkpointtime>\cohorttime$, and as \defn{residual memorisation} when $\checkpointtime = \lastcheckpointtime$.

% ===========
% METHODOLOGY
% ===========
\section{Estimating Memorisation}\label{sec:estimation}

The practical implication of defining memorisation at a treatment level $\cohorttime$ is that we can only make causal claims for groups of instances treated at the same timestep (i.e., a batch $\batch_\cohorttime$), rather than for individual instances.
However, even though this approach simplifies the problem, estimating \cref{eq:expect_memorisation} still poses major challenges as it contains a counterfactual.
A simple decomposition makes this counterfactual explicit:%
\begin{align}\label{eq:expect_memorisation_breakdown}
     & \memorisationcohort =
    \\
     & \underbrace{\expect_{\xdatainstance}\! \big[Y_{\checkpointtime}(\xdatainstance; \cohorttime)\mathop{\mid} \Cohorttime(\xdatainstance) \mathop{=} \cohorttime\big]\!}_{\circled{1}} \mathop{-} \underbrace{\expect_{\xdatainstance} \!\big[Y_{\checkpointtime}(\xdatainstance; \notseen)\mathop{\mid}\Cohorttime(\xdatainstance) \mathop{=} \cohorttime\big]\!}_{\circled{2}} \nonumber
\end{align}
Expectation $\circled{1}$ can be directly estimated from the data because batch $\batch_\cohorttime$ contains a set of examples $\xdatainstance$ for which $\Cohorttime(\xdatainstance)\mathop{=}\cohorttime$ and, thus, we can invoke the consistency assumption to equate $Y_{\checkpointtime}(\xdatainstance; \cohorttime)$ with the observed outcome $Y_{\checkpointtime}(\xdatainstance)$.
Let us define the mean across instances in a batch as:
\begin{align}\label{eq:mean_Y}
    \yestimator_{\checkpointtime}(\cohorttime) \defeq \frac{1}{|\batch_\cohorttime|} \sum_{\xdatainstance \in \batch_\cohorttime} Y_{\checkpointtime}(\xdatainstance)
\end{align}
We can thus use \cref{eq:mean_Y} as an estimator for expectation $\circled{1}$. Expectation $\circled{2}$, however, is counterfactual: we cannot observe the potential outcome $Y_{\checkpointtime}(\xdatainstance; \notseen)$ for instances treated at timestep $\cohorttime\mathop{\neq}\notseen$.
The presence of counterfactual potential outcomes in causal estimands creates challenges for their estimation, being known as the fundamental problem of causal inference \citep{holland-1986-statistics}.
The goal of causal methods is then to estimate these counterfactual outcomes from observed ones, using comparable groups of instances.
Thus far, we have defined our causal estimand.
In this section, we perform steps two and three of causal estimation (\cref{sec:causal_analysis}): we derive two statistical estimands for our causal estimand (identifying it under specific assumptions), and provide concrete estimators for them.

\subsection{The Difference Estimator}\label{sec:estimation:diff}

Our first approach to estimate memorisation is straightforward and only requires the observed outcomes of a held-out validation set.
However, it relies on a strong identification assumption.

\begin{assumption}[I.i.d.\ Dataset Sampling]\label{ass:iid_sampling}
    Instances $\xdatainstance$ are independently and identically distributed, following $p(\vx)$, and are randomly assigned to treatment groups $\cohorttime$.\looseness=-1
\end{assumption}

Under this assumption, we have that $p(\vx \mathop{\mid} \Cohorttime(\vx) \mathop{=}\cohorttime) \!=\! p(\vx \mathop{\mid} \Cohorttime(\vx) \mathop{=}\notseen) \!=\! p(\vx)$.
Thus, the following statistical estimand identifies $\memorisationcohort$:\looseness=-1%
\begin{align}\label{eq:diff_identification}
    \diffestimand = \expect_{\xdatainstance} \big[ & Y_{\checkpointtime}(\xdatainstance; \cohorttime) \mid \Cohorttime(\xdatainstance) \mathop{=} \cohorttime \big]                                            \\
                                                   & - \expect_{\xdatainstance} \big[Y_{\checkpointtime}(\xdatainstance; \notseen) \mid \mathop{\Cohorttime(\xdatainstance) \mathop{=} \notseen}\big]\nonumber
\end{align}
(See \cref{lemma:identification_difference_estimand} in \cref{app:proof_diff_estimand} for a proof.)
Note that, unlike \cref{eq:expect_memorisation}, the second term in this estimand is not counterfactual: it is the expected observed outcome of validation instances, $\batch_{\notseen}$.
This statistical estimand can then be estimated as follows.

\begin{estimator}\label{def:differenceestimator}
    The \defn{difference estimator}, defined as:
    \begin{equation}\label{eq:difference estimator}
        \diffestimator = \yestimator_{\checkpointtime}(\cohorttime) - \yestimator_{\checkpointtime}(\notseen)
    \end{equation}
    is an unbiased estimator of $\memorisationcohort$ under \cref{ass:iid_sampling}.
\end{estimator}
\begin{proof}
    \vspace{-2pt}
    See \cref{lemma:unbiasedness_difference_estimator} in \cref{app:proof_diff_estimator} for a proof.
\end{proof}

Notably, \cref{ass:iid_sampling} is satisfied by the training procedure we described in \cref{sec:langmodel}, and is commonly true in machine learning.
However, it might not hold in general, as the train and validation distributions may not match exactly.
For example, NLP practitioners might deduplicate their training set but not validation \citep{biderman-etal-2023-pythia} or might use challenge sets for validation \citep{kiela-etal-2021-dynabench}.
Moreover, even when \cref{ass:iid_sampling} holds, \cref{eq:difference estimator} is low-variance only if we have large enough samples to compute $\yestimator_\checkpointtime(\cohorttime)$ and $\yestimator_\checkpointtime(\notseen)$.
Unfortunately, given the size of state-of-the-art LMs and their datasets, it can be expensive---both in terms of computation and memory usage---to extract the performance measures $Y_{\checkpointtime}(\xdatainstance)$ for all instances $\xdatainstance$ and checkpoints $\checkpointtime$.
Even with unlimited compute, $\yestimator_\checkpointtime(\cohorttime)$ can only be estimated using instances in $\batch_\cohorttime$, which lower bounds the variance of this estimator.

While the difference estimator is a first step towards a principled estimator of counterfactual memorisation, we can do better.
In the next section, we describe an estimator that has lower variance and requires weaker assumptions.

\vspace{-2pt}
\subsection{The Difference-in-Differences Estimator}\label{sec:estimation:did}
\vspace{-1pt}

We now introduce another causal estimator based on the difference-in-differences (\did) design.
The intuition behind \did is to use the time dimension to help with identification; \did identifies a causal estimand using the difference in the \emph{trends} over time of the outcome on treated vs.\ untreated instances.
In our specific setting, \did relies on the assumption that changes in model performance over time would follow similar trends in different batches if they had not been used for training.
We formalise this assumption as follows.

\begin{assumption}[Parallel Trends]\label{ass:parallel_trends}
    In the absence of training, the expected change in model performance across checkpoints would be the same regardless of treatment.
    That is, for all $\checkpointtime, \checkpointtimesec \geq \cohorttime \mathop{-} 1$:
    \begin{align}
         & \expect_{\xdatainstance}\big[Y_{\checkpointtime}(\xdatainstance; \notseen) - Y_{\checkpointtimesec}(\xdatainstance;\notseen) \mid \Cohorttime(\xdatainstance) \mathop{=} \cohorttime \big]                          \\
         & \qquad = \expect_{\xdatainstance}\big[Y_{\checkpointtime}(\xdatainstance; \notseen) - Y_{\checkpointtimesec}(\xdatainstance;\notseen) \mid \mathop{\Cohorttime(\xdatainstance) \mathop{=} \notseen} \big] \nonumber
    \end{align}
\end{assumption}

We need a second assumption before we can apply the \did design to our setting.\looseness=-1

\begin{assumption}[No Anticipation]\label{ass:no_anticipation}
    Training has no effect before it happens.
    That is, for all $\checkpointtime<\cohorttime$:
    \begin{align}
         & \expect_{\xdatainstance}\big[Y_{\checkpointtime}(\xdatainstance; \cohorttime) \mid \Cohorttime(\xdatainstance) \mathop{=} \cohorttime\big]                          \\
         & \qquad\qquad\quad =\expect_{\xdatainstance}\big[Y_{\checkpointtime} (\xdatainstance; \notseen)\mid\Cohorttime(\xdatainstance) \mathop{=} \cohorttime\big] \nonumber
    \end{align}
\end{assumption}

Given these two assumptions, we can now follow \citet{callaway-santanna-2021-difference} in identifying our target statistical estimand.\footnote{The \did design was originally proposed for the case with only two treatment and checkpoint steps (i.e., $\cohorttime \in \{1, \notseen\}$ and $\checkpointtime \in \{0, 1\}$). Previous work has shown the challenges of extending \did to multiple timesteps, especially when allowing for heterogeneous treatment effects \citep{roth-etal-2023-whats}. \citet{callaway-santanna-2021-difference} propose an extension which identifies \cref{eq:expect_memorisation} while allowing for treatment effect heterogeneity across checkpoint and treatment steps.}
Combined, these assumptions allow us to rewrite expectation \circled{2} in \cref{eq:expect_memorisation_breakdown} as a function of potential outcomes that are observable: $Y_{\checkpointtime}(\xdatainstance; \notseen)$, $Y_{\lagcohorttime}(\xdatainstance;\notseen)$ given $\Cohorttime(\xdatainstance) \mathop{=} \notseen$ are observable on a held-out validation set, while
$Y_{\lagcohorttime}(\xdatainstance;\cohorttime)$ given $\Cohorttime(\xdatainstance) \mathop{=} \cohorttime$ is observable on the training set.
The following statistical estimand thus identifies $\memorisationcohort$:
\begin{align}\label{eq:did_identification}
     & \didestimand = \expect_{\xdatainstance} [Y_{\checkpointtime}(\xdatainstance; \cohorttime) \!-\! Y_{\lagcohorttime}(\xdatainstance;\cohorttime) \!\mid \Cohorttime(\xdatainstance)\mathop{=}\cohorttime] \! \\
     & \qquad\,\,\, -\expect_{\xdatainstance} [Y_{\checkpointtime}(\xdatainstance; \notseen) \!-\! Y_{\lagcohorttime}(\xdatainstance; \notseen) \!\mid \Cohorttime(\xdatainstance)\mathop{=}\notseen]\nonumber
\end{align}
(See \cref{lemma:identification_did_estimand} in \cref{app:proof_did_estimand} for a proof.)
This leads to the following \did estimator.

\begin{estimator}\label{defn:didestimator}
    The \defn{difference-in-differences estimator} (\did), defined as:
    \begin{align}\label{eq:did_estimator}
        \didestimator = \underbrace{\!\Big(\yestimator_{\!\!\checkpointtime}(\cohorttime) \mathop{-} \yestimator_{\!\!\lagcohorttime}(\cohorttime)\Big)\!}_{\emph{diff in trained}} - \underbrace{\!\Big(\yestimator_{\!\!\checkpointtime}(\notseen) \mathop{-} \yestimator_{\!\!\lagcohorttime}(\notseen)\Big)\!}_{\emph{diff in untrained}}
    \end{align}
    is an unbiased estimator of $\memorisationcohort$ under \cref{ass:parallel_trends,ass:no_anticipation}.
\end{estimator}
\begin{proof}
    \vspace{-2pt}
    See \cref{lemma:unbiasedness_did_estimator} in \cref{app:proof_did_estimator} for a proof.
\end{proof}

The \did estimator depends on weaker assumptions and has a lower variance (under mild assumptions, see \cref{app:variance}) than the difference estimator in \cref{eq:difference estimator}.
Specifically, the parallel trends assumption (\cref{ass:parallel_trends}) is strictly weaker than the i.i.d.\ one (\cref{ass:iid_sampling}): if $p(\vx \mathop{\mid} \Cohorttime(\vx) \mathop{=}\cohorttime) = p(\vx \mathop{\mid} \Cohorttime(\vx) \mathop{=}\notseen)$, then it is trivial that performances should present parallel trends.
Moreover, \cref{ass:parallel_trends} only requires that the training and validation sets follow similar trends on average, which might be true even in the case of, e.g., challenge validation sets or deduplicated training data.
Therefore, the assumptions underpinning \did are more likely to hold in practice and we will use it to estimate memorisation here.

In practice, the difference-in-differences estimation procedure includes two steps.
First, we compute the model's performance on a subset of analysed instances---i.e., samples from the training and validation sets---using the available checkpoints; thus forming a \defn{panel} of observed outcomes, as it is usually termed in econometrics.
Then, we use this panel to compute the estimates in \cref{eq:did_estimator}.

% ============
% RELATED WORK
% ============
\section{Prior Notions of Memorisation}\label{sec:related_work}

Memorisation has recently received much attention \citep[\textit{inter alia}]{arpit2017closer,carlinisecret,carlini-etal-2021-extracting,anagnostidis2023the}.\footnote{See \citet{hartmann-etal-2023-sok} or \citet{ishihara-2023-training} for surveys.\looseness=-1}
Prior work has studied how model architecture and training choices influence memorisation \citep{tirumala-etal-2022-memorization, kandpal2022deduplicating, lee-etal-2022-deduplicating,biderman2023emergent}, and where memorised instances are stored within a model \citep{maini2023neural,stoehr2024localizing}.
In this section, we use our framework to discuss three prior notions of memorisation which we consider most relevant to our paper: previous operationalisations of counterfactual memorisation \citep[e.g.,][]{feldman-2020-does}, influence functions \citep{zheng-jiang-2022-empirical}, and extractable memorisation \citep{carlini2023quantifying}.

\subsection{Previous Operationalisations of Counterfactual Memorisation}

As mentioned before, estimating an instance's memorisation $\memorisationinstance$ is non-trivial due to the counterfactual component in its definition.
We avoid this issue by estimating expected memorisation $\memorisationcohort$ instead.
Prior work \citep{feldman-2020-does, feldman-zhang-2020-what, zhang-etal-2023-counterfactual,lukasik2023larger} takes a different approach, comparing the performance of models trained with and without that instance.
In doing so, they average performance across training runs, measuring what we term \defn{architectural counterfactual memorisation}.

Formally, let $\trainingvar$ be a vector of variables responsible for training variance.
This includes, e.g., the data permutation induced by $\permutationfn$ and the initial model parameters $\vtheta_0$.
By defining a distribution $p(\trainingvar)$ over these variables, architectural memorisation can be defined as follows.

\begin{definition}\label{defn:arch_memorisation}
    \defn{Architectural counterfactual memorisation} is the counterfactual memorisation $\memorisationinstancelastcheckpoint$ when marginalising over training variables $\trainingvar$:
    \begin{align}\label{eq:arch_memorisation}
        \memarchitecture
         & \defeq \expect_{\trainingvar}\left[\memorisationinstancelastcheckpoint  \mathop{\mid} \Cohorttime(\xdatainstance) \mathop{\neq} \notseen \right]                                                                                                             \\
         & = \expect_{\trainingvar}\left[Y_{\lastcheckpointtimesmall}(\xdatainstance; \Cohorttime(\xdatainstance)) \mathop{-} Y_{\lastcheckpointtimesmall}(\xdatainstance; \notseen) \mathop{\mid} \Cohorttime(\xdatainstance) \mathop{\neq} \notseen \right] \nonumber
    \end{align}
    where $\Cohorttime(\xdatainstance)$ in the first potential outcome depends on which batch the shuffling function $\permutationfn$ puts $\xdatainstance$ in.\looseness=-1
\end{definition}

Prior work has proposed a number of methods to estimate this value \citep{bachmann2022generalization, lin-etal-2022-measuring, ilyas-etal-2022-datamodels, park-etal-2023-trak}.
The simplest of these is to train several models while including or not $\xdatainstance$ in the training set; these models are then used to approximate the expectation above.
We describe a statistical estimand and estimator for $\memarchitecture$ in \cref{app:arch_estimator}, discussing the assumptions needed by this approach.

Notably, this operationalisation has the advantage of estimating memorisation at the instance level.
However, it also has drawbacks---beyond just being computationally expensive to estimate.
These become apparent upon closer inspection of the definition of $\memarchitecture$.
First, it does not provide insights into the effect of the checkpoint step or treatment step on memorisation;
this is because $\lastcheckpointtime$ is hard-coded into $\memarchitecture$'s definition and because it marginalises over permutations of the data.
While it is trivial to generalise this definition to other checkpoint steps $\checkpointtime$ or to specific $\cohorttime$, prior work has mainly focused on these choices, overlooking the impact of training dynamics on memorisation.
Further, and perhaps more importantly, marginalising over $p(\trainingvar)$ means that this metric quantifies memorisation for a model architecture, rather than for a specific model.

\subsection{Influence Functions}\label{sec:influence_functions}

Influence functions \citep{hampel1974influence,cook1980characterizations} estimate---without re-training a model---how its parameters would change if an instance $\xdatainstance$ was removed from the training set.
Specifically, the new set of parameters can be approximated  as follows \citep{koh-liang-2017-understanding}:
\begin{align}\label{eq:influence_function}
    \optvthetawithoutx \approx \optvthetawithx + \nicefrac{1}{N} \, \hessian^{\mathop{-1}}_{\vtheta} \, \nabla_{\vtheta} \loss(\optvthetawithx, \xdatainstance)
\end{align}
where $\hessian_{\vtheta}$ is the hessian of $\loss$ evaluated at $\optvthetawithx$.
This approximation is based on a first-order Taylor expansion of the training objective around $\optvthetawithx$, and should lead to small errors under the following assumptions:
(i) the loss function is strictly convex in $\vtheta$, (ii) $\hessian_{\vtheta}$ is a positive-definite matrix, and (iii) the model has converged (i.e., the gradient is zero).
Given these assumptions, influence functions can be used to efficiently estimate the counterfactual term $Y_{\lastcheckpointtimesmall}(\xdatainstance; \notseen)$ in the definition of $\memorisationinstancelastcheckpoint$.
Specifically, we can measure the model's performance using the updated parameters, $\Ywithoutx(\xdatainstance) \defeq \perffn(\optvthetawithoutx; \xdatainstance)$, and equate $Y_{\lastcheckpointtimesmall}(\xdatainstance; \notseen) = \Ywithoutx(\xdatainstance)$.
The influence function estimator of memorisation can then be written as:
\begin{align}
    \memorisationinfluenceestimator = Y_{\lastcheckpointtimesmall}(\xdatainstance) - \Ywithoutx(\xdatainstance)
\end{align}
We formalise this estimator and its statistical estimand in \cref{app:influence_estimator}.
Notably, \citet{zheng-jiang-2022-empirical} use a similar approach to estimate memorisation in a classification setting.

Influence functions thus provide a computationally efficient method to estimate instance-level counterfactual memorisation.
However, none of the required assumptions above is typically satisfied for LMs, which can lead to strong biases in this estimator \citep{basu-etal-2020-influence, bae-etal-2022-if, schioppa-etal-2023-theoretical}.
Moreover, assumptions (ii) and (iii) require $\optvthetawithx$ to be locally optimal, meaning that this approach is not applicable for studying $\checkpointtime < \lastcheckpointtime$.
We therefore cannot use it to study how memorisation interacts with training dynamics.

\subsection{Extractable Memorisation}\label{sec:extractable_memorisation}

\citet{carlini2023quantifying} defines memorisation as \kl-extractability; a string is \kl-extractable if the model correctly predicts $\ell$ of its tokens given a prefix of $k$ tokens.
This definition has recently gained much interest because of its relevance to copyright infringement and data protection.
Despite being seemingly different, we argue that extractable memorisation is an estimator for counterfactual memorisation.
Concretely, let performance $\perffn$ be measured as whether a string is \kl-extractable.
Now, assume that a string is not \kl-extractable in the absence of training, i.e., $Y_{\checkpointtime}(\xdatainstance; \notseen) \mathop{=} 0$.
Given this assumption, we can define an extractable memorisation estimator as:
\begin{align}
    \klestimator = Y_{\checkpointtime}(\xdatainstance)
\end{align}
We formalise this estimator and its statistical estimand in \cref{app:extractable_estimator}.

Extractable memorisation thus implicitly assumes that $Y_{\checkpointtime}(\xdatainstance; \notseen) \mathop{=} 0$.
Counterfactual memorisation, on the other hand, controls for this counterfactual.
Notably, assuming $Y_{\checkpointtime}(\xdatainstance; \notseen) \mathop{=} 0$ may be reasonable when a string is long and complex; in this case, the chance that $\xdatainstance$ would be in the model’s top-\integer{1} beam (the output of greedy decoding) approaches zero.
However, this assumption may not be reasonable for shorter or less complex sequences. Rather, it may cause the resulting estimate to conflate memorisation with the intrinsic difficulty of predicting a string.

% ===========
% EXPERIMENTS
% ===========
\section{Experiments}\label{sec:experiments}

While our method applies to any model trained with a single pass on its training data, we focus on quantifying memorisation in pretrained LMs, which are characterised by the use of architectures with a large number of parameters and large datasets.
Due to the costs of training such models from scratch, we take advantage of open-source pretrained models whose intermediate checkpoints and preprocessed data are publicly available.
In this section, we detail the models and data used and describe how we collect the observed outcomes $Y_{\checkpointtime}(\xdatainstance)$.

\paragraph{The Pythia Suite.}
We use the publicly available Pythia model suite\footnote{Both preprocessed data and intermediate checkpoints are publicly available at \href{https://github.com/EleutherAI/pythia}{\makesf{github.com/EleutherAI/pythia}}.} \citep{biderman-etal-2023-pythia}, composed of \integer{8} transformers of sizes ranging from \q{70}{\million} to \q{12}{\billion} parameters.
These models were trained on the Pile dataset \citep{gao-etal-2020-pile, biderman-etal-2022-datasheet}, a \q{300}{\billion}-token curated collection of English documents.
All models are trained using the same data.
Specifically, the dataset is shuffled and \enquote{packed} into sequences of \integer{2049} tokens;\footnote{Since target tokens are the right-shifted input tokens, to compute the loss on \integer{2048} tokens the Pythia authors added a token to the context.}
each of these sequences corresponds to an instance $\xdatainstance$.
Training was performed using a cosine learning rate schedule with warm-up, and using a batch size of \integer{1024} sequences, resulting in exactly \q{143}{\thousand} optimisation steps.
We use the model versions trained on the deduplicated Pile dataset to reduce the risk of finding spurious memorisation patterns due to duplication.
The deduplicated dataset has \q{207}{\billion} tokens, thus models using this version are trained for circa \float[1]{1.5} epochs to keep an equal token count relative to the non-deduplicated versions.
We consider the checkpoints relative to the first epoch (i.e., up to step \q{95}{\thousand}).\footnote{For completeness, we report the second half-epoch (steps \q{96}{\thousand}-\q{143}{\thousand}) analysis in \cref{app:additional_plots}.}
More details are in \cref{app:implementation_details}.

\paragraph{Constructing the Panel.}
Ideally, we would collect performance metrics for each instance $\xdatainstance$ and for every checkpoint step $\checkpointtime$.
However, given the size of the Pile dataset, it is computationally infeasible to collect evaluations for all instances; thus we resort to subsampling this data.
Furthermore, the granularity of the available checkpoints (i.e., every \q{1}{\thousand} timesteps) does not allow us to consider each timestep; thus we consider timesteps $\checkpointtime \in \{0, \q{1}{\thousand} \smalldots, \q{95}{\thousand}\}$ and treatment timesteps $\cohorttime \in \{\q{1}{\thousand}, \q{2}{\thousand} \smalldots, \q{95}{\thousand}\}$.
To match the checkpoint frequency, we consider all instances between two checkpoints (i.e., \q{1}{\thousand} batches) as if they were seen by the model at the same timestep.
We term these groups of batches \defn{\cohorts}\footnote{In econometrics group of instances that undergo treatment at the same time are typically termed \textit{cohorts}.} and formally define them as $\macrobatch_{\cohorttime} = \bigcup_{\cohorttime - \q{1}{\thousand} < t \leq \cohorttime} \batch_{t}$.
To obtain enough evaluations for each \cohort, we sample instances from the training set in two steps: we randomly choose \integer{10} batches for each \cohort and sample \integer{10} instances from each.
This process results in \q{14.3}{\thousand} analysed training instances.
Additionally, we sample \q{2}{\thousand} instances from the validation set to create $\macrobatch_{\notseen}$.
This process returns a panel of \q{16.3}{\thousand} instances evaluated at \integer{96} timesteps.\footnote{Our data and experimental artefacts are publicly available at \href{https://huggingface.co/collections/pietrolesci/memorisation-profiles-6619604c4594c878cd9d451f}{\makesf{huggingface.co/collections/pietrolesci/memorisation-profiles}}.}
As our performance metric we use the sequence-level log-likelihood: $\perffn(\vtheta, \vx) = \log \ptheta(\vx)$. \footnote{Results using different metrics are reported in \Cref{app:additional_plots}.}

\begin{figure*}[!t]
    \centering\small
    \includegraphics[width=\linewidth]{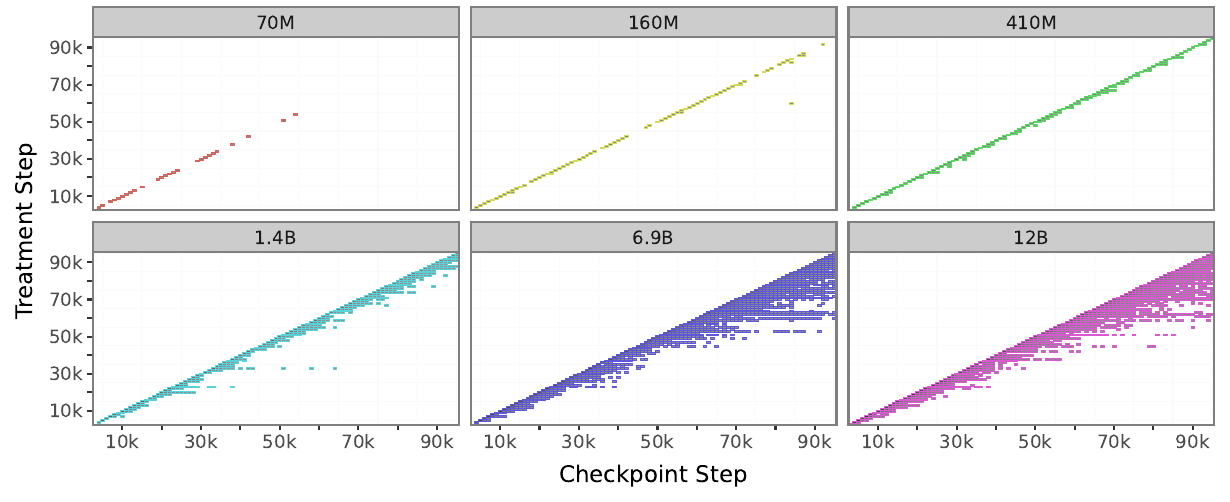}
    \vspace{-15pt}
    \caption{Memorisation profiles ($\memorisationcohort$). We only show statistically significant entries.}
    \label{fig:memorisation_profiles}
\end{figure*}

\paragraph{Statistical Inference.}
To compute statistical significance, we use the Simple Multiplier Bootstrap procedure of \citet{callaway-santanna-2021-difference} which returns simultaneous confidence intervals for all memorisation estimates, accounting for dependencies across \cohorts and checkpoint steps and thus avoiding multiple-testing issues.

% =======
% RESULTS
% =======
\section{Results}

We report the memorisation profiles of all Pythia sizes in \cref{fig:memorisation_profiles}. Below, we use these memorisation profiles to describe different types of memorisation.

\begin{figure}[!t]
    \centering\small
    \includegraphics[width=\columnwidth]{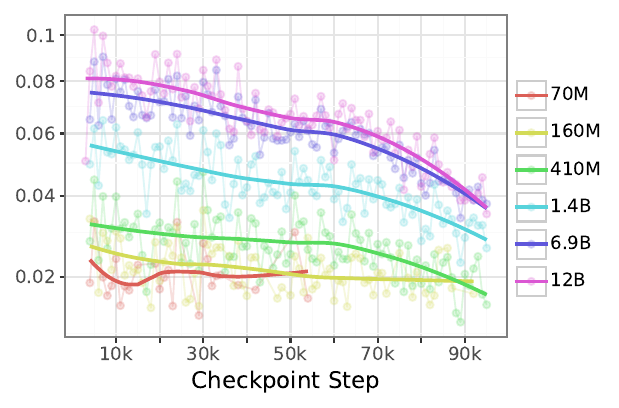}
    \vspace{-15pt}
    \caption{Instantaneous memorisation ($\memorisationcohort$ for $\cohorttime=\checkpointtime$). We only show statistically significant estimates.}
    \label{fig:instantaneous_memorisation}
\end{figure}

\paragraph{Instantaneous Memorisation.}
Instantaneous memorisation estimates (defined in \cref{sec:mem_definitions} as $\memorisationcohort$ when $\cohorttime \mathop{=} \checkpointtime$) are depicted as the diagonal entries in the memorisation profiles in \cref{fig:memorisation_profiles}, and are also presented in \cref{fig:instantaneous_memorisation}.
From these estimates, we can clearly observe the effect of the treatment step on memorisation: instantaneous memorisation is stronger earlier in training than later.
Interestingly (but perhaps unsurprisingly), instantaneous memorisation correlates with the cosine learning rate schedule: it is stronger after the warm-up period (around timestep \q{1.5}{\thousand}) than before it.
Furthermore, and as expected, instantaneous memorisation increases with model size.\footnote{Notably, we expect instantaneous memorisation to always be present in normally-trained LMs (albeit with a potentially small value). It could thus be used for power analysis \citep{cohen-1992-statistical}: choosing the number of instances to sample per \cohort which provides sufficient statistical power to correctly detect memorisation.}

\paragraph{Persistent Memorisation.}
Persistent memorisation estimates (defined in \cref{sec:mem_definitions} as $\memorisationcohort$ when $\cohorttime > \checkpointtime$) are depicted as the off-diagonal entries in the memorisation profiles in \cref{fig:memorisation_profiles}. \cref{fig:avg_persistent_memorisation} shows the average persistent memorisation at a specific number of timesteps after treatment; in this figure, $\memorisationcohort$ were averaged across \cohorts for each $\checkpointtime\mathop{-}\cohorttime$.\footnote{By averaging across \cohorts, variance is lower and more estimates become statistically significant.}
This way of aggregating the memorisation profile allows us to summarise the general memorisation patterns of a model.
Smaller models have lower persistent memorisation, with \q{70}{\million} having no persistent memorisation.
Interestingly, persistent memorisation plateaus after \q{25}{\thousand} timesteps.
This result has implications for data ordering during training.
For example, if there are particular instances that we do not want the model to memorise, but which we still want to use during training, they should be included in earlier batches.\footnote{We note that our results differ from \citeposs{biderman-etal-2023-pythia}, who find no differences in memorisation due to an instance's treatment step. We hypothesise that this discrepancy stems from the differences in metrics used to quantify memorisation and the statistical approaches adopted.}

\begin{figure}[!t]
    \centering\small
    \includegraphics[width=\columnwidth]{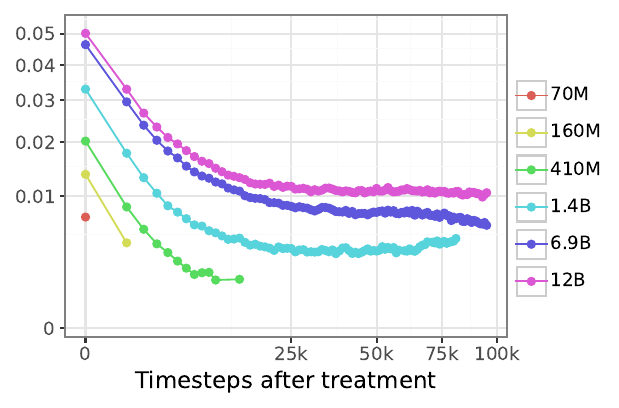}
    \vspace{-15pt}
    \caption{Average persistent memorisation ($\memorisationcohort$ averaged per timestep after treatment, i.e., $\checkpointtime\mathop{-}\cohorttime$). We only show statistically significant estimates.}
    \label{fig:avg_persistent_memorisation}
\end{figure}

\begin{figure*}[!t]
    \centering\small
    \includegraphics[width=\linewidth]{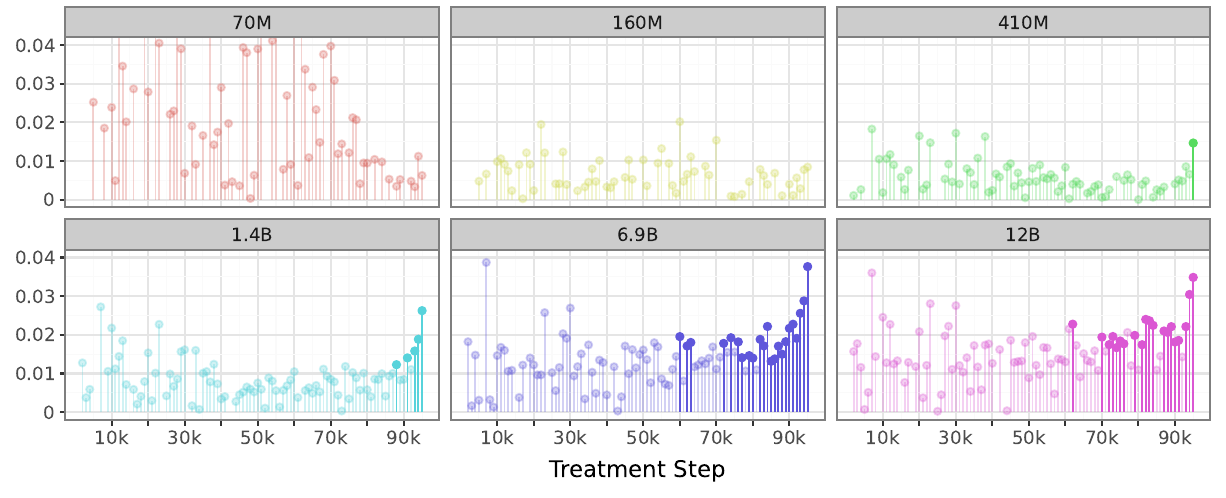}
    \vspace{-15pt}
    \caption{Residual memorisation ($\memorisationcohort$ for $\checkpointtime = \lastcheckpointtime = \q{95}{\thousand}$). Stronger colour intensity indicates statistical significance.}
    \label{fig:final_memorisation}
\end{figure*}

\begin{figure}[!t]
    \centering\small
    \includegraphics[width=\columnwidth]{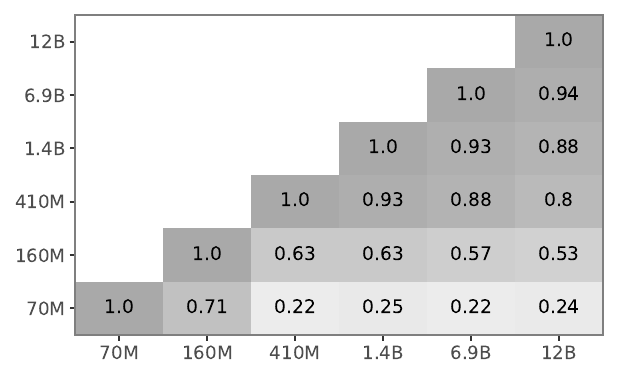}
    \vspace{-15pt}
    \caption{Pearson correlation between the memorisation profile of different models.}
    \label{fig:correlation_across_sizes}
\end{figure}

\paragraph{Residual Memorisation.}
Residual memorisation estimates (defined in \cref{sec:mem_definitions} as $\memorisationcohort$ when $\checkpointtime = \lastcheckpointtime$) are depicted as the final-column entries in \cref{fig:memorisation_profiles}, and are also presented in \cref{fig:final_memorisation} (we consider $\lastcheckpointtime$ to be the end of the first epoch here, i.e., timestep \q{95}{\thousand}).
Interestingly, while all \cohorts undergo some degree of instantaneous memorisation, it appears that many are then forgotten by the end of the first epoch, as shown by the statistically insignificant residual memorisation estimates.
Furthermore, in line with our persistent memorisation results, residual memorisation shows a recency effect: the final \cohorts are the most memorised.
We hypothesise that this recency effect can be explained by the learning rate schedule.
Specifically, when the learning rate is high, the optimisation process moves model parameters further in the locally optimal direction, thus \enquote{overwriting} previous information with new information; this results in higher instantaneous and lower residual memorisation.
On the contrary, towards the end of the training process, when the learning rate is lower, previous information is \enquote{forgotten} less as the updates are smaller (in expectation), resulting in higher residual and lower instantaneous memorisation.

\paragraph{Memorisation Across Scales.}
Due to the cost of training large LMs, it is highly desirable to be able to make predictions about a trained model's characteristics before undertaking training. One strategy is to derive insights from smaller models to inform the design of larger ones \citep{rae2021scaling,black2022gpt,scao2022language}.\footnote{Scaling laws for other notions of memorisation (\cref{sec:related_work}) have been studied in \citet{biderman2023emergent}.}
Predictability across scales is visually apparent in \cref{fig:instantaneous_memorisation,fig:avg_persistent_memorisation} where there are similar trends across model sizes.
We formalise this intuition in \cref{fig:correlation_across_sizes}, where we report the Pearson correlation between the memorisation profiles of different models.
Interestingly, memorisation for larger models (e.g., \q{12}{\billion}) is predictable from smaller ones (e.g., \q{410}{\million}).
We note that \q{70}{\million} and \q{160}{\million} are less predictive of the memorisation in \q{12}{\billion}.
However, prior work has shown that both these models suffer from training instability \citep{godey-etal-2024-why}; the reduction in predictability with these smaller models might thus be specific to the Pythia suite.

% ===========
% CONCLUSIONS
% ===========
\section{Conclusions}

The memorisation of training data by neural networks has critical implications for privacy, copyright, and security. Thus, well-founded quantifications of memorisation, and corresponding accurate and efficient methods for their estimation are of great importance.
This work presents one such quantification and builds on the econometrics literature to derive an unbiased and efficient estimator of memorisation based on the difference-in-differences design.
We use this estimator to study the memorisation profiles of the Pythia model suite and find that memorisation is stronger and more persistent in larger models, determined by data order and learning rate, and stable across model sizes.

% ===========
% LIMITATIONS
% ===========
\section*{Limitations}

This work estimates counterfactual memorisation in pretrained LMs.
Unfortunately, due to the costs associated with running large pretrained LMs---even in inference mode---we experimented with a limited number of models (the Pythia suite) trained in a single language (English).
Investigating whether other model architectures, training procedures, and natural languages result in similar memorisation profiles would be important to strengthen our conclusions.
Furthermore, when collecting the panel data needed to estimate memorisation, we subsampled the number of evaluated instances; this can significantly increase our estimators' variance.

% ================
% ACKNOWLEDGEMENTS
% ================
\section*{Acknowledgements}
\setlength{\intextsep}{0pt}
\setlength{\columnsep}{8pt}
\begin{wrapfigure}{l}{0.45\columnwidth}
    \includegraphics[width=0.45\columnwidth]{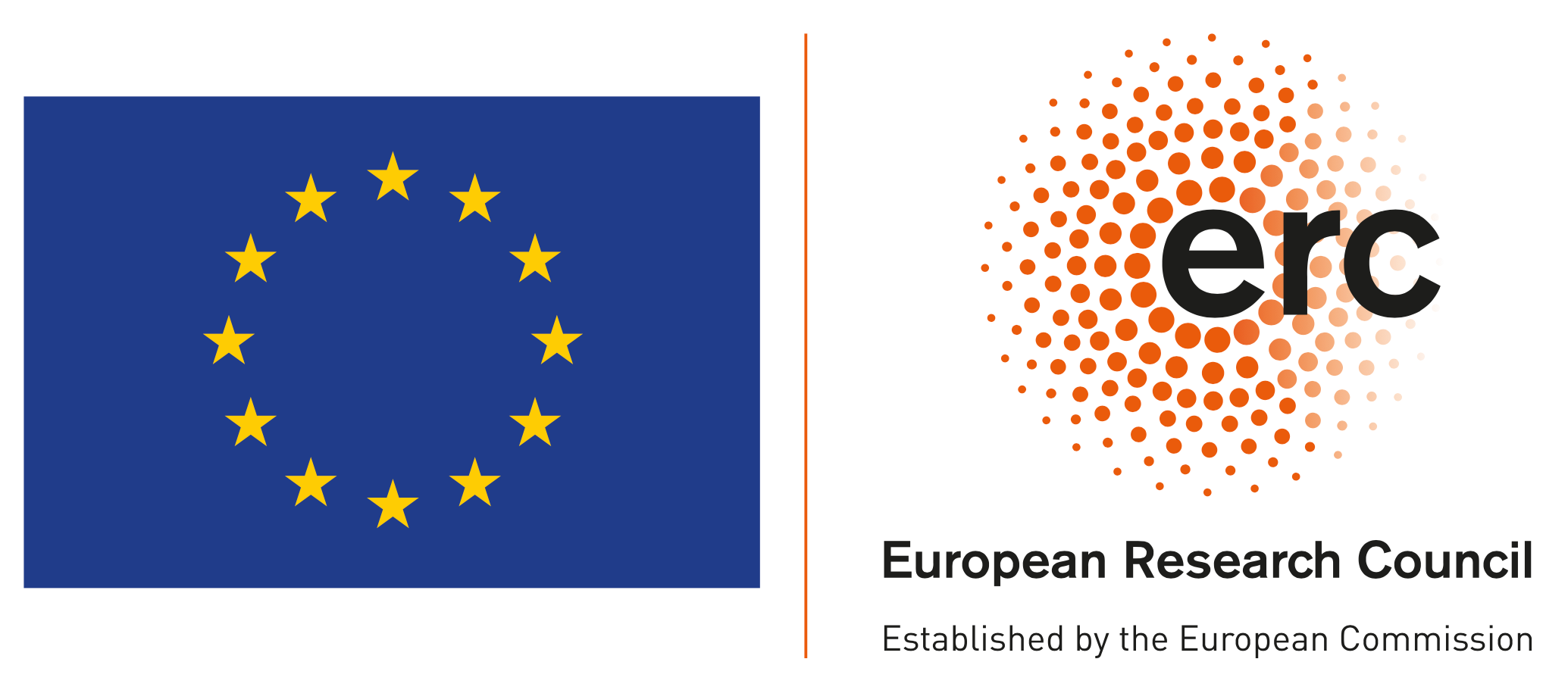}
\end{wrapfigure}
Pietro and Andreas received funding from the European Research Council (ERC) under the European Union’s Horizon 2020 Research and Innovation programme grant AVeriTeC (Grant agreement No. 865958).
Clara is funded by a Google PhD Fellowship.
We thank the anonymous reviewers, for their helpful questions and comments that helped us improve the paper.
We also thank Machel Reid for early discussions on the topic;
Sidak Pal Singh, Gregor Bachmann, and Ornella Darova for their feedback on earlier versions of this paper; and
Davide Lesci and Marco Lesci for proofreading the final version of the paper.

% ======================== %
% ===== Bibliography ===== %
% ======================== %
\bibliography{biblio.bib}

% ==================== %
% ===== Appendix ===== %
% ==================== %
\clearpage
\appendix
\onecolumn

% ======
% Proofs
% ======
\section{Proofs}\label{app:proofs}

\subsection{Difference Estimand and Estimator}
\label{app:proof_diff_estimand}
\label{app:proof_diff_estimator}

\begin{lemma}[Identification of the Difference Estimand]
    The difference estimand, defined in \cref{eq:diff_identification}, identifies expected counterfactual memorisation (i.e., the causal estimand in \cref{eq:expect_memorisation}) under \cref{ass:iid_sampling}.
    \label{lemma:identification_difference_estimand}
\end{lemma}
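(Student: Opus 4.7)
The plan is to unfold $\memorisationcohort$ from \cref{eq:expect_memorisation} by linearity of expectation into the two-term form of \cref{eq:expect_memorisation_breakdown}, and then show that under \cref{ass:iid_sampling} the second (counterfactual) term, which conditions on $\Cohorttime(\xdatainstance)\mathop{=}\cohorttime$, equals the analogous expectation conditioned on $\Cohorttime(\xdatainstance)\mathop{=}\notseen$. Once that exchange of conditioning events is justified, what remains on the right-hand side is exactly the definition of $\diffestimand$ in \cref{eq:diff_identification}, so identification follows immediately.

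The key step is the exchange of conditioning events. First I would note that, by the potential-outcomes framework (SUTVA-style), $Y_{\checkpointtime}(\xdatainstance;\notseen)$ is a deterministic function of $\xdatainstance$ (and of $\checkpointtime$ and the fixed training protocol on the other batches), so the only source of randomness in the expectation $\expect_{\xdatainstance}[Y_{\checkpointtime}(\xdatainstance;\notseen)\mid \Cohorttime(\xdatainstance)\mathop{=}\cohorttime]$ is the conditional distribution $p(\vx\mid \Cohorttime(\vx)\mathop{=}\cohorttime)$. Under \cref{ass:iid_sampling}, instances are i.i.d.\ and treatment assignment is independent of $\vx$, which gives $p(\vx\mid \Cohorttime(\vx)\mathop{=}\cohorttime) = p(\vx\mid \Cohorttime(\vx)\mathop{=}\notseen) = p(\vx)$. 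I would then use this marginal-distribution equality to rewrite the counterfactual expectation as an expectation under the validation-set distribution, obtaining
\begin{equation*}
    \expect_{\xdatainstance}\!\big[Y_{\checkpointtime}(\xdatainstance;\notseen)\mid \Cohorttime(\xdatainstance)\mathop{=}\cohorttime\big] = \expect_{\xdatainstance}\!\big[Y_{\checkpointtime}(\xdatainstance;\notseen)\mid \Cohorttime(\xdatainstance)\mathop{=}\notseen\big].
\end{equation*}
Substituting this into \cref{eq:expect_memorisation_breakdown} yields $\memorisationcohort = \diffestimand$.

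The main subtle point (and the closest thing to an obstacle) is ensuring that $Y_{\checkpointtime}(\xdatainstance;\notseen)$ really is a function of $\xdatainstance$ alone once the other batches and training hyperparameters are held fixed, so that its conditional expectation depends on the conditioning event only through the implied distribution over $\xdatainstance$. This is essentially a non-interference / consistency assumption that is already built into the potential-outcomes notation adopted in \cref{sec:causal_analysis}, so I would state it explicitly at the start of the proof and then the rest reduces to a one-line marginal-equality argument. The corresponding estimator result (\cref{lemma:unbiasedness_difference_estimator}) then follows by taking sample means in \cref{eq:mean_Y} and invoking linearity of expectation together with consistency $Y_{\checkpointtime}(\xdatainstance;\Cohorttime(\xdatainstance))=Y_{\checkpointtime}(\xdatainstance)$ on each of the two terms.
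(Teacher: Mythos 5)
Your proposal is correct and matches the paper's proof in substance: both hinge on using \cref{ass:iid_sampling} to replace the conditioning event $\Cohorttime(\xdatainstance)\mathop{=}\notseen$ with $\Cohorttime(\xdatainstance)\mathop{=}\cohorttime$ (or vice versa) in the counterfactual term, followed by linearity of expectation; the only difference is that you start from $\memorisationcohort$ and work toward $\diffestimand$, while the paper runs the same chain in the reverse direction. Your explicit remark that $Y_{\checkpointtime}(\xdatainstance;\notseen)$ depends on the conditioning event only through the distribution of $\xdatainstance$ is a harmless clarification of what the paper leaves implicit.
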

\begin{proof}
    For this proof, we start with the definition of the difference estimand and, via algebraic manipulation, show its equality to expected counterfactual memorisation:
    \begin{subequations}
        \begin{align}
            \diffestimand
             & = \expect_{\xdatainstance} \big[Y_{\checkpointtime}(\xdatainstance; \cohorttime)  \mid \Cohorttime(\xdatainstance) = \cohorttime \big] - \expect_{\xdatainstance} \big[Y_{\checkpointtime}(\xdatainstance; \notseen) \mid \Cohorttime(\xdatainstance) = \notseen\big]                                                  \\
             & = \expect_{\xdatainstance} \big[Y_{\checkpointtime}(\xdatainstance; \cohorttime)  \mid \Cohorttime(\xdatainstance) = \cohorttime \big] - \expect_{\xdatainstance} \big[Y_{\checkpointtime}(\xdatainstance; \notseen) \mid \Cohorttime(\xdatainstance) = \cohorttime \big] & \mathcomment{By \cref{ass:iid_sampling}}   \\
             & = \expect_{\xdatainstance} \big[Y_{\checkpointtime}(\xdatainstance; \cohorttime) - Y_{\checkpointtime}(\xdatainstance; \notseen) \mid \Cohorttime(\xdatainstance) = \cohorttime \big]                                                                                     & \mathcomment{By linearity of expectations} \\
             & = \memorisationcohort
        \end{align}
    \end{subequations}
    This completes the proof.
\end{proof}

\begin{lemma}[Unbiasedness of the Difference Estimator]
    The difference estimator, defined in \cref{def:differenceestimator}, is an unbiased estimator of expected counterfactual memorisation $\memorisationcohort$ under \cref{ass:iid_sampling}.
    \label{lemma:unbiasedness_difference_estimator}
\end{lemma}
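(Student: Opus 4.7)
The plan is to show unbiasedness by first computing $\expect[\diffestimator]$, reducing it to the difference estimand from \cref{eq:diff_identification}, and then invoking \cref{lemma:identification_difference_estimand} to conclude that it equals $\memorisationcohort$. This splits the work cleanly: the identification step (which uses \cref{ass:iid_sampling}) has already been carried out, so the remaining task is purely about going from the sample-average estimator $\yestimator_\checkpointtime(\cohorttime)$ to the population expectation.

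First I would apply linearity of expectation to write $\expect[\diffestimator] = \expect[\yestimator_\checkpointtime(\cohorttime)] - \expect[\yestimator_\checkpointtime(\notseen)]$, where the outer expectation is taken over the randomness in the instances composing $\batch_\cohorttime$ and $\batch_\notseen$. Next I would expand $\yestimator_\checkpointtime(\cohorttime)$ using its definition in \cref{eq:mean_Y} and pull the expectation inside the finite sum; since \cref{ass:iid_sampling} guarantees that each $\xdatainstance \in \batch_\cohorttime$ is an i.i.d.\ draw from $p(\vx \mid \Cohorttime(\vx) = \cohorttime)$, every summand contributes the same value $\expect_{\xdatainstance}[Y_\checkpointtime(\xdatainstance) \mid \Cohorttime(\xdatainstance) = \cohorttime]$, and the $|\batch_\cohorttime|$ factor cancels. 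The analogous argument applied to $\yestimator_\checkpointtime(\notseen)$ yields $\expect_{\xdatainstance}[Y_\checkpointtime(\xdatainstance) \mid \Cohorttime(\xdatainstance) = \notseen]$.

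Then I would apply consistency (the property $Y_\checkpointtime(\xdatainstance) = Y_\checkpointtime(\xdatainstance; \cohorttime)$ whenever $\Cohorttime(\xdatainstance) = \cohorttime$, as stated in \cref{sec:causal_analysis}) to rewrite both observed outcomes as potential outcomes. What remains is exactly the right-hand side of \cref{eq:diff_identification}, i.e., $\diffestimand$. Invoking \cref{lemma:identification_difference_estimand} then gives $\expect[\diffestimator] = \memorisationcohort$, which is the claimed unbiasedness.

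I do not anticipate any real obstacle: each step is mechanical once the prior identification lemma is in hand. The only mild subtlety is being careful about \emph{which} distribution the expectation over $\xdatainstance$ is taken against in each of the two terms (conditional on $\Cohorttime(\xdatainstance) = \cohorttime$ versus $\Cohorttime(\xdatainstance) = \notseen$), and ensuring that the consistency step is applied separately in each of these two regimes before combining via linearity.
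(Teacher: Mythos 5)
Your proposal is correct and follows essentially the same route as the paper's proof: expand the batch averages, use linearity and the independence of $\batch_\cohorttime$ and $\batch_\notseen$ to reduce each term to the corresponding conditional population expectation, and then invoke \cref{lemma:identification_difference_estimand} to conclude equality with $\memorisationcohort$. Your explicit mention of the consistency step is a detail the paper leaves implicit when passing from observed to potential outcomes, but it does not change the argument.
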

\begin{proof}
    To prove this estimator is unbiased, let us first define the probability of sampling a batch $\batch_\cohorttime$:
    \begin{align} \label{eq:proof_diff:prob_batch}
        p(\batch_\cohorttime) = \prod_{\xdatainstance \in \batch_\cohorttime} p(\xdatainstance \mid \Cohorttime(\xdatainstance) = \cohorttime)
    \end{align}
    Taking the expectation of our estimator with respect to the batches used for its estimation we see that:
    \begin{subequations}
        \begin{align}
            \expect_{\batch_\cohorttime, \batch_\notseen}\left[\diffestimator\right]
             & =  \expect_{\batch_\cohorttime, \batch_\notseen}\left[\yestimator_{\checkpointtime}(\cohorttime) - \yestimator_{\checkpointtime}(\notseen)\right]                                                                                                                                                               \\
             & =  \expect_{\batch_\cohorttime, \batch_\notseen}\left[\frac{1}{|\batch_\cohorttime|} \sum_{\xdatainstance \in \batch_\cohorttime} Y_{\checkpointtime}(\xdatainstance) - \frac{1}{|\batch_\notseen|} \sum_{\xdatainstance \in \batch_\notseen} Y_{\checkpointtime}(\xdatainstance)\right]                        \\
             & =  \expect_{\batch_\cohorttime}\left[\frac{1}{|\batch_\cohorttime|} \sum_{\xdatainstance \in \batch_\cohorttime} Y_{\checkpointtime}(\xdatainstance)\right]  -
            \expect_{\batch_\notseen}\left[\frac{1}{|\batch_\notseen|} \sum_{\xdatainstance \in \batch_\notseen} Y_{\checkpointtime}(\xdatainstance)\right] \label{eq:proof_diff:sep_expectations}                                                                                                                             \\
             & = \expect_{\xdatainstance} \big[Y_{\checkpointtime}(\xdatainstance; \cohorttime)  \mid \Cohorttime(\xdatainstance) = \cohorttime \big] - \expect_{\xdatainstance} \big[Y_{\checkpointtime}(\xdatainstance; \notseen) \mid \Cohorttime(\xdatainstance) = \notseen\big] \label{eq:proof_diff:avg_to_expectations} \\
             & = \diffestimand
        \end{align}
    \end{subequations}
    where \cref{eq:proof_diff:sep_expectations} follows because the sampling of $\batch_\cohorttime$ and $\batch_\notseen$ are independent;
    \cref{eq:proof_diff:avg_to_expectations} holds due to \cref{eq:proof_diff:prob_batch} and the unbiasedness of Monte Carlo estimators.
    We can now invoke \cref{lemma:identification_difference_estimand}, which states that $\diffestimand$ identifies $\memorisationcohort$ under the i.i.d.\ assumption (\cref{ass:iid_sampling}).
    Thus, we have that the expected value of our estimator is equal to $\memorisationcohort$, finishing the proof.
\end{proof}

\subsection{Difference-in-Differences Estimand and Estimator}
\label{app:proof_did_estimand}
\label{app:proof_did_estimator}

\begin{lemma}[Identification of the Difference-in-Differences Estimand]
    The \did estimand, defined in \cref{eq:did_identification}, identifies expected counterfactual memorisation (i.e., the causal estimand in \cref{eq:expect_memorisation}) under \cref{ass:parallel_trends,ass:no_anticipation} for all $\checkpointtime\geq\cohorttime$.\looseness=-1
    \label{lemma:identification_did_estimand}
\end{lemma}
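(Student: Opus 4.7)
The plan is to start from the \did estimand in \cref{eq:did_identification} and, using \cref{ass:parallel_trends,ass:no_anticipation}, rewrite it so that all observable trends collapse and only the causal contrast defining $\memorisationcohort$ (\cref{eq:expect_memorisation}) remains. The whole argument is a two-substitution manipulation: no anticipation kills the pre-treatment potential outcome under treatment, and parallel trends swaps the untreated trend across conditioning groups.

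\textbf{Step 1: apply no anticipation.} In the first expectation of $\didestimand$ we have the term $\expect_{\xdatainstance}[Y_{\lagcohorttime}(\xdatainstance;\cohorttime) \mid \Cohorttime(\xdatainstance)=\cohorttime]$. Since $\lagcohorttime < \cohorttime$, \cref{ass:no_anticipation} lets us replace this by $\expect_{\xdatainstance}[Y_{\lagcohorttime}(\xdatainstance;\notseen) \mid \Cohorttime(\xdatainstance)=\cohorttime]$. After this substitution, the first bracket becomes $\expect_{\xdatainstance}[Y_{\checkpointtime}(\xdatainstance;\cohorttime) - Y_{\lagcohorttime}(\xdatainstance;\notseen) \mid \Cohorttime(\xdatainstance)=\cohorttime]$.

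\textbf{Step 2: apply parallel trends.} The hypothesis of the lemma is $\checkpointtime \geq \cohorttime$, so in particular $\checkpointtime, \lagcohorttime \geq \cohorttime - 1$, and \cref{ass:parallel_trends} applies with $\checkpointtimesec = \lagcohorttime$. This equates the untreated trend conditioned on the never-treated group, which is the second bracket of $\didestimand$, with the corresponding trend conditioned on the treated group: $\expect_{\xdatainstance}[Y_{\checkpointtime}(\xdatainstance;\notseen) - Y_{\lagcohorttime}(\xdatainstance;\notseen) \mid \Cohorttime(\xdatainstance)=\cohorttime]$.

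\textbf{Step 3: combine and cancel.} Subtracting the rewritten second bracket from the rewritten first bracket and using linearity of expectation, the two $Y_{\lagcohorttime}(\xdatainstance;\notseen)$ terms (now both conditioned on $\Cohorttime(\xdatainstance)=\cohorttime$) cancel, leaving exactly $\expect_{\xdatainstance}[Y_{\checkpointtime}(\xdatainstance;\cohorttime) - Y_{\checkpointtime}(\xdatainstance;\notseen) \mid \Cohorttime(\xdatainstance)=\cohorttime] = \memorisationcohort$, completing the identification. There is no real obstacle here, beyond book-keeping: the main thing to be careful about is checking that the range $\checkpointtime \geq \cohorttime$ assumed by the lemma indeed activates \cref{ass:parallel_trends} at the pair $(\checkpointtime, \lagcohorttime)$ and activates \cref{ass:no_anticipation} at $\lagcohorttime$, which it does.
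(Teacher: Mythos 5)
Your proof is correct and follows essentially the same route as the paper's: apply no anticipation at checkpoint $\lagcohorttime$ to the treated group, apply parallel trends with $\checkpointtimesec = \lagcohorttime$ to swap the untreated trend's conditioning group, and cancel the common $Y_{\lagcohorttime}(\xdatainstance;\notseen)$ terms to recover $\memorisationcohort$. The only cosmetic difference is that the paper rearranges the parallel-trends identity to isolate the counterfactual term $\expect_{\xdatainstance}[Y_{\checkpointtime}(\xdatainstance;\notseen)\mid\Cohorttime(\xdatainstance)=\cohorttime]$ before substituting, whereas you swap the conditioning on the whole second bracket; the algebra is identical.
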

\begin{proof}

    To prove this lemma, we first note that by the no anticipations assumption (\cref{ass:no_anticipation}):
    \begin{align}
        \expect_{\xdatainstance}\left[
            Y_{{\color{\checkpointcolor}\cohorttime-1}}(\xdatainstance;\notseen)
            \mid \Cohorttime(\xdatainstance) = \cohorttime\right] =
        \expect_{\xdatainstance}\left[
            Y_{{\color{\checkpointcolor}\cohorttime-1}}(\xdatainstance;\cohorttime)
            \mid \Cohorttime(\xdatainstance) = \cohorttime\right] \label{eq:no_anticipation_rem}
    \end{align}
    Furthermore, by the parallel trends assumption (\cref{ass:parallel_trends}) and linearity of expectations:
    \begin{align}
         & \expect_{\xdatainstance}\big[Y_{\checkpointtime}(\xdatainstance; \notseen) - Y_{\checkpointtimesec}(\xdatainstance;\notseen) \mid \Cohorttime(\xdatainstance) = \cohorttime \big]  = \expect_{\xdatainstance}\big[Y_{\checkpointtime}(\xdatainstance; \notseen) - Y_{\checkpointtimesec}(\xdatainstance;\notseen) \mid \Cohorttime(\xdatainstance) = \notseen \big] \label{eq:parallel_trend_rem}                                                                                                                           \\
         & \,\, \implies \expect_{\xdatainstance}\big[Y_{\checkpointtime}(\xdatainstance;\notseen)  \mathop{\mid} \Cohorttime(\xdatainstance) \mathop{=} \cohorttime \big] = \expect_{\xdatainstance}\big[Y_{\checkpointtimesec}(\xdatainstance; \notseen) \mathop{\mid} \Cohorttime(\xdatainstance) \mathop{=} \cohorttime \big]  - \expect_{\xdatainstance}\big[Y_{\checkpointtime}(\xdatainstance; \notseen) - Y_{\checkpointtimesec}(\xdatainstance;\notseen)  \mathop{\mid} \Cohorttime(\xdatainstance) \mathop{=} \notseen \big]
        \nonumber
    \end{align}
    As in \cref{lemma:identification_difference_estimand}, we now start with the definition of the DiD estimand and, via algebraic manipulation, show its equivalence to expected counterfactual memorisation:
    \begin{subequations}
        \begin{align}
             & \didestimand \nonumber                                                                                                                                                                                                                    \\
             & \quad = \expect \big[Y_{\checkpointtime}(\xdatainstance; \cohorttime) - Y_{{\color{\checkpointcolor}\cohorttime-1}}(\xdatainstance; \cohorttime) \mid \Cohorttime(\xdatainstance) = \cohorttime\big] -
            \expect \big[Y_{\checkpointtime}(\xdatainstance; \notseen) - Y_{{\color{\checkpointcolor}\cohorttime-1}}(\xdatainstance; \notseen) \mid \Cohorttime(\xdatainstance) = \notseen\big]                                                          \\
             & \quad = \expect \big[Y_{\checkpointtime}(\xdatainstance; \cohorttime)\!\mid\! \Cohorttime(\xdatainstance) \!=\! \cohorttime\big]
            - \underbrace{\expect \big[Y_{{\color{\checkpointcolor}\cohorttime-1}}(\xdatainstance; \cohorttime) \!\mid\! \Cohorttime(\xdatainstance) \!=\! \cohorttime\big]}_{\mathcomment{no anticipation}} - \expect \big[Y_{\checkpointtime}(\xdatainstance; \notseen)
            - Y_{{\color{\checkpointcolor}\cohorttime-1}}(\xdatainstance; \notseen) \!\mid\! \Cohorttime(\xdatainstance) \!=\! \notseen\big]  \label{eq:no_ant}                                                                                          \\
             & \quad =
            \expect \big[ Y_{\checkpointtime}(\xdatainstance; \cohorttime) \!\mid\! \Cohorttime(\xdatainstance) \!=\! \cohorttime\big] -
            \underbrace{
                \expect \big[Y_{{\color{\checkpointcolor}\cohorttime-1}}(\xdatainstance; \notseen) \!\mid\! \Cohorttime(\xdatainstance) \!=\! \cohorttime\big] -
                \expect \big[
                    Y_{\checkpointtime}(\xdatainstance; \notseen) -
                    Y_{{\color{\checkpointcolor}\cohorttime-1}}(\xdatainstance; \notseen) \!\mid\! \Cohorttime(\xdatainstance) \!=\! \notseen\big]}_{\mathcomment{parallel trends}}
            \label{eq:parallel_trends}                                                                                                                                                                                                                   \\
             & \quad = \expect \big[Y_{\checkpointtime}(\xdatainstance; \cohorttime)\mid \Cohorttime(\xdatainstance) = \cohorttime\big] - \expect \big[Y_{\checkpointtime}(\xdatainstance; \notseen) \mid \Cohorttime(\xdatainstance) = \cohorttime\big] \\
             & \quad = \expect \big[Y_{\checkpointtime}(\xdatainstance; \cohorttime) - Y_{\checkpointtime}(\xdatainstance; \notseen) \mid \Cohorttime(\xdatainstance) = \cohorttime\big] \label{eq:lemma_diff_estimand:linearity_of_expectaitons}        \\
             & \quad = \memorisationcohort
        \end{align}
    \end{subequations}
    where we replace the terms in \cref{eq:no_ant,eq:parallel_trends} using their equivalences given in \cref{eq:no_anticipation_rem} and \cref{eq:parallel_trend_rem}, respectively.
    This completes the proof.
    We note that a similar proof is available in Lemma A.1 in \citet{callaway-santanna-2021-difference}.
\end{proof}

\begin{lemma}[Unbiasedness of the Difference-in-Differences Estimator]
    The difference-in-differences estimator, defined in \cref{defn:didestimator}, is an unbiased estimator of expected counterfactual memorisation $\memorisationcohort$ under \cref{ass:parallel_trends,ass:no_anticipation}.
    \label{lemma:unbiasedness_did_estimator}
\end{lemma}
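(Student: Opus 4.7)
The plan is to mirror the structure of the unbiasedness proof for the difference estimator (Lemma~\ref{lemma:unbiasedness_difference_estimator}), but applied to the four averages that make up $\didestimator$ rather than just two. I would take the expectation of $\didestimator$ over the random sampling of the relevant batches $\batch_\cohorttime$ and $\batch_\notseen$ at the two checkpoints $\checkpointtime$ and $\lagcohorttime$, then reduce the computation to the identification result established in Lemma~\ref{lemma:identification_did_estimand}.

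Concretely, I would first write
\begin{align*}
\expect_{\batch_\cohorttime,\batch_\notseen}\!\big[\didestimator\big]
= \expect_{\batch_\cohorttime}\!\big[\yestimator_{\checkpointtime}(\cohorttime)-\yestimator_{\lagcohorttime}(\cohorttime)\big]
 - \expect_{\batch_\notseen}\!\big[\yestimator_{\checkpointtime}(\notseen)-\yestimator_{\lagcohorttime}(\notseen)\big]
\end{align*}
using linearity of expectation and the independence between the training batch $\batch_\cohorttime$ and the validation batch $\batch_\notseen$. Note that since the same batch $\batch_\cohorttime$ is used to evaluate at both $\checkpointtime$ and $\lagcohorttime$ (and similarly for $\batch_\notseen$), the two \enquote{diff} terms are not mutually independent, but this does not affect unbiasedness; we only need the outer expectations over the \emph{instances} within each batch to be well-behaved.

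Next, I would invoke the fact that the batches $\batch_\cohorttime$ and $\batch_\notseen$ are formed by i.i.d.\ samples from $p(\vx\mid \Cohorttime(\vx)=\cohorttime)$ and $p(\vx\mid \Cohorttime(\vx)=\notseen)$ respectively, so that each $\yestimator_{\checkpointtimesec}(\cdot)$ is a Monte Carlo estimator of the corresponding conditional expectation of $Y_{\checkpointtimesec}(\xdatainstance)$. By consistency, $Y_{\checkpointtimesec}(\xdatainstance)=Y_{\checkpointtimesec}(\xdatainstance;\Cohorttime(\xdatainstance))$, so these averages are unbiased for $\expect_{\xdatainstance}[Y_{\checkpointtimesec}(\xdatainstance;\cohorttime)\mid\Cohorttime(\xdatainstance)=\cohorttime]$ and $\expect_{\xdatainstance}[Y_{\checkpointtimesec}(\xdatainstance;\notseen)\mid\Cohorttime(\xdatainstance)=\notseen]$. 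Substituting these into the display above gives exactly $\didestimand$ as defined in \cref{eq:did_identification}. Applying \cref{lemma:identification_did_estimand} then equates $\didestimand$ with $\memorisationcohort$ under \cref{ass:parallel_trends,ass:no_anticipation}, closing the proof.

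The main obstacle is conceptual rather than computational: one must be careful that the two terms inside each \enquote{diff} share the same underlying batch of instances, so the argument factorises only \emph{across} the trained/untrained split, not \emph{within} each checkpoint pair. Once this is made explicit, the proof is a direct application of linearity of expectation, Monte Carlo unbiasedness, and the already-proven identification lemma, so no additional assumptions beyond those of \cref{lemma:identification_did_estimand} are required.
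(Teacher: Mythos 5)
Your proposal matches the paper's proof essentially step for step: take the expectation over the sampled batches, split it using independence of $\batch_\cohorttime$ and $\batch_\notseen$, use Monte Carlo unbiasedness (with consistency) to recover the conditional expectations defining $\didestimand$, and conclude via \cref{lemma:identification_did_estimand}. Your explicit remark that the within-pair dependence (same batch evaluated at $\checkpointtime$ and $\lagcohorttime$) is harmless for unbiasedness is a correct clarification of a point the paper leaves implicit, but the argument is the same.
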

\begin{proof}
    We can follow the same logic as in \cref{lemma:unbiasedness_difference_estimator} because the same properties hold (i.e., the sampling of $\batch_\cohorttime$ and $\batch_\notseen$ are independent,
    the joint probability of a set is the product of the probability of sampling individual instances, and the unbiasedness of Monte Carlo estimators). We then arrive at the following equivalence:
    \begin{subequations}
        \begin{align}
            \expect_{\batch_\cohorttime, \batch_\notseen}\!\!\!\!\left[\didestimator\right]
             & = \expect_{\batch_\cohorttime, \batch_\notseen}\bigg[\yestimator_{\checkpointtime}(\cohorttime) - \yestimator_{{\color{\checkpointcolor}\cohorttime-1}}(\cohorttime) - \yestimator_{\checkpointtime}(\notseen) + \yestimator_{{\color{\checkpointcolor}\cohorttime-1}}(\notseen) \bigg]                                                                                                                                                                       \\
             & =  \expect_{\batch_\cohorttime, \batch_\notseen}\left[\frac{1}{|\batch_\cohorttime|} \sum_{\xdatainstance \in \batch_\cohorttime} \Big(Y_{\checkpointtime}(\xdatainstance) - Y_{{\color{\checkpointcolor}\cohorttime-1}}(\xdatainstance)\Big) - \frac{1}{|\batch_\notseen|} \sum_{\xdatainstance \in \batch_\notseen} \Big(Y_{\checkpointtime}(\xdatainstance)- Y_{{\color{\checkpointcolor}\cohorttime-1}}(\xdatainstance)\Big)\right]                       \\
             & =  \expect_{\batch_\cohorttime}\left[\frac{1}{|\batch_\cohorttime|} \sum_{\xdatainstance \in \batch_\cohorttime} \Big(Y_{\checkpointtime}(\xdatainstance) - Y_{{\color{\checkpointcolor}\cohorttime-1}}(\xdatainstance)\Big)\right] - \expect_{ \batch_\notseen}\left[\frac{1}{|\batch_\notseen|} \sum_{\xdatainstance \in \batch_\notseen} \Big(Y_{\checkpointtime}(\xdatainstance)- Y_{{\color{\checkpointcolor}\cohorttime-1}}(\xdatainstance)\Big)\right] \\
             & = \expect_{\xdatainstance} \big[Y_{\checkpointtime}(\xdatainstance; \cohorttime) - Y_{{\color{\checkpointcolor}\cohorttime-1}}(\xdatainstance; \cohorttime) \mathop{\mid} \Cohorttime(\xdatainstance)\! = \!\cohorttime\big] - \expect_{\xdatainstance} \big[Y_{\checkpointtime}(\xdatainstance; \notseen) - Y_{{\color{\checkpointcolor}\cohorttime-1}}(\xdatainstance; \notseen) \mathop{\mid} \Cohorttime(\xdatainstance)\! = \!\notseen\big]              \\
             & = \didestimand
        \end{align}
    \end{subequations}
    Finally, we invoke \cref{lemma:identification_did_estimand} which proves that $\didestimand$ identifies $\memorisationcohort$ under \cref{ass:parallel_trends,ass:no_anticipation}.
\end{proof}

\subsection{Variances of Estimators}\label{app:variance}

\newcommand{\var}{\mathrm{Var}}
\newcommand{\cov}{\mathrm{Cov}}
We assume that all potential outcomes $Y_{\checkpointtime}(\xdatainstance;\cohorttime)$ have the same variance $\sigma^2$.
We now first look at the variance of the difference estimator.
To this end, let's consider the variance of $\yestimator_\checkpointtime(\cohorttime)$:
\begin{align}
    \var\Big(\yestimator_\checkpointtime(\cohorttime) \Big)
     & = \var\left(\frac{1}{|\batch_{\cohorttime}|}\sum_{\vx \in \batch_\cohorttime}Y_{\checkpointtime}(\xdatainstance) \right)
    = \frac{1}{|\batch_{\cohorttime}|^2} \sum_{\vx \in \batch_\cohorttime} \var\Big(Y_{\checkpointtime}(\xdatainstance; \cohorttime) \mid \Cohorttime(\xdatainstance) = \cohorttime \Big)
    = \frac{|\batch_{\cohorttime}|\,\sigma^2}{|\batch_{\cohorttime}|^2}
    = \frac{\sigma^2}{|\batch_{\cohorttime}|}
\end{align}
This is simply the variance of estimating an expectation using the mean of $|\batch_{\cohorttime}|$ i.i.d. random variables, each with variance $\sigma^2$. We can similarly derive the variance of $\yestimator_\checkpointtime(\notseen)$.
The variance of $\diffestimator$ is then:%
\begin{equation}
    \var(\diffestimator) = \frac{\sigma^2}{|\batch_{\cohorttime}|} + \frac{\sigma^2}{|\batch_{\notseen}|} - 2\,\cov(\yestimator_\checkpointtime(\cohorttime), \yestimator_\checkpointtime(\notseen))
\end{equation}
Assuming batches $\batch_{\cohorttime}$ and $\batch_{\notseen}$ were drawn independently,
then the estimators $\yestimator_\checkpointtime(\cohorttime)$ and $\yestimator_\checkpointtime(\notseen)$ should also be independent.
Thus, $\cov(\yestimator_\checkpointtime(\cohorttime), \yestimator_\checkpointtime(\notseen)) = 0$.

We now look at the variance of the difference-in-differences estimator.
Let the correlation between $Y_{\checkpointtime}(\xdatainstance;\cohorttime)$ and $Y_{{\color{\checkpointcolor}\cohorttime-1}}(\xdatainstance;\cohorttime)$ be $\rho$.
These are, respectively, the potential outcomes of our model on a specific instance $\xdatainstance$ before and after training on it.
For shorthand, let $\Delta \yestimator_\cohorttime = \yestimator_\checkpointtime(\cohorttime) - \yestimator_{\color{\checkpointcolor}\cohorttime-1}(\cohorttime)$ and $\Delta \yestimator_\notseen = \yestimator_\checkpointtime(\notseen) - \yestimator_{\color{\checkpointcolor}\cohorttime-1}(\notseen)$.
We can show that:
\begin{subequations}
    \begin{align}
        \var(\Delta \yestimator_\cohorttime ) & =\var \Big(\yestimator_\checkpointtime(\cohorttime) - \yestimator_{\color{\checkpointcolor}\cohorttime-1}(\cohorttime)\Big)                                                                                                                              \\
                                              & = \var \left( \frac{1}{|\batch_{\cohorttime}|}\sum_{\xdatainstance \in \batch_\cohorttime}{\Big(Y_{\checkpointtime}(\xdatainstance;\cohorttime) - Y_{{\color{\checkpointcolor}\cohorttime-1}}(\xdatainstance;\cohorttime)\Big)} \right)                  \\
                                              & = \frac{1}{|\batch_{\cohorttime}|^2} \sum_{\xdatainstance \in \batch_\cohorttime}\bigg(\sigma^2+\sigma^2-2\,\cov\Big(Y_{\checkpointtime}(\xdatainstance;\cohorttime), Y_{{\color{\checkpointcolor}\cohorttime-1}}(\xdatainstance;\cohorttime)\Big)\bigg) \\
                                              & =\frac{1}{|\batch_{\cohorttime}|^2} \sum_{\xdatainstance \in \batch_\cohorttime}{(2\sigma^2-2\rho\sigma^2)} = \frac{2\sigma^2}{|\batch_{\cohorttime}|}(1-\rho)
    \end{align}
\end{subequations}
We can derive the variance for $\Delta \yestimator_\notseen$ in the exact same manner.
We thus have that:
\begin{align}
    \var(\didestimator) =  \var(\Delta \yestimator_\cohorttime) + \var(\Delta \yestimator_\notseen) - 2\cov(\Delta \yestimator_\cohorttime, \Delta \yestimator_\notseen)
\end{align}
Note that $\Delta \yestimator_\cohorttime$ and $\Delta \yestimator_\notseen$ are estimated with independent samples, and thus, $\cov(\Delta \yestimator_\cohorttime, \Delta \yestimator_\notseen) = 0$.
We can thus rewrite this estimator's variance as:
\begin{align}
    \var(\didestimator) =  \frac{2\sigma^2}{|\batch_{\cohorttime}|}(1-\rho_\cohorttime) + \frac{2\sigma^2}{|\batch_{\cohorttime}|}(1-\rho_\notseen)
\end{align}
If we have $\rho_\cohorttime > 0.5$ and $\rho_\notseen>0.5$, then the variance of $\didestimator$ should be lower than that of the $\diffestimator$.
This is a reasonable assumption since---for fixed timesteps ${\color{\checkpointcolor}\cohorttime\mathop{-}1}$ and $\checkpointtime$---there should be a strong relationship between a model's performance on an instance before (${\color{\checkpointcolor}\cohorttime\mathop{-}1}$) and after ($\checkpointtime$) it has been trained on due to factors such as vocabulary richness and grammatical structure.\looseness=-1

% ==========
% Prior Work
% ==========
\section{Statistical Estimands and Estimators in Prior Work}

In this section, we formalise prior works' estimators of memorisation using our formalisation of counterfactual memorisation.

\subsection{Architectural Counterfactual Memorisation}\label{app:arch_estimator}

In this section, we describe one potential estimator for architectural counterfactual memorisation $\memarchitecture$ (in \cref{defn:arch_memorisation}).
First, we need the following assumption in order to identify the causal estimand for this quantity.\looseness=-1
\begin{assumption}[Negligible training effect] \label{ass:negligible_training_effect}
    In expectation, the effect of having a specific instance in the training set is negligible on any validation instance. That is, for any two instances $\xdatainstance$ and $\xdatainstance'$:
    \begin{align}
         & \expect_{\trainingvar}\left[Y_{\lastcheckpointtimesmall}(\xdatainstance'; \notseen) \mathop{\mid} \Cohorttime(\xdatainstance) \mathop{=} \notseen \right] = \expect_{\trainingvar}\left[Y_{\lastcheckpointtimesmall}(\xdatainstance'; \notseen) \mathop{\mid} \Cohorttime(\xdatainstance) \mathop{\neq} \notseen \right]
    \end{align}
\end{assumption}

Given this assumption, we can identify the following statistical estimand for $\memarchitecture$:
\begin{align}\label{eq:arch_memorisation_stat_estimand}
    \memorisationarchitecture
     & = \expect_{\trainingvar}\left[Y_{\lastcheckpointtimesmall}(\xdatainstance; \Cohorttime(\xdatainstance)) \mid \Cohorttime(\xdatainstance) \neq \notseen \right]
    - \expect_{\trainingvar}\left[Y_{\lastcheckpointtimesmall}(\xdatainstance; \notseen) \mid \Cohorttime(\xdatainstance) = \notseen \right]
\end{align}
We now define the architectural estimator, associated with this statistical estimand.
\begin{estimator}
    The \defn{architectural estimator}, defined as:\footnote{We note that prior work has proposed more efficient estimators of the above \citep{bachmann2022generalization, lin-etal-2022-measuring, ilyas-etal-2022-datamodels, park-etal-2023-trak}. However, these estimators remain computationally expensive for large LMs.}
    \begin{align}
        \memorisationarchitectureestimator
        = \frac{1}{|\paramswithx|} \sum_{\vtheta \in \paramswithx} Y_{\lastcheckpointtimesmall}(\xdatainstance)  - \frac{1}{|\paramswithoutx|} \sum_{\vtheta \in \paramswithoutx} Y_{\lastcheckpointtimesmall}(\xdatainstance)
    \end{align}
    is an unbiased estimator of $\memarchitecture$ under \cref{ass:negligible_training_effect}.
    In this equation, $\paramswithx$ and $\paramswithoutx$ are sets of model parameters trained independently with or without $\xdatainstance$ in the training set.
\end{estimator}
\begin{proof}
    First, we prove that the statistical estimand $\memorisationarchitecture $ identifies $\memarchitecture$:
    \begin{subequations}
        \begin{align}
            \memorisationarchitecture
             & = \expect_{\trainingvar}\left[Y_{\lastcheckpointtimesmall}(\xdatainstance; \Cohorttime(\xdatainstance)) \mid \Cohorttime(\xdatainstance) \neq \notseen \right]
            - \expect_{\trainingvar}\left[Y_{\lastcheckpointtimesmall}(\xdatainstance; \notseen) \mid \Cohorttime(\xdatainstance) = \notseen \right]                                                                                 \\
             & = \expect_{\trainingvar}\left[Y_{\lastcheckpointtimesmall}(\xdatainstance; \Cohorttime(\xdatainstance)) \mid \Cohorttime(\xdatainstance) \neq \notseen \right]  -
            \expect_{\trainingvar}\left[Y_{\lastcheckpointtimesmall}(\xdatainstance; \notseen)\mid \Cohorttime(\xdatainstance) \neq \notseen\right]
             & \mathcomment{By \cref{ass:negligible_training_effect}}                                                                                                                                                                \\
             & = \expect_{\trainingvar}\left[Y_{\lastcheckpointtimesmall}(\xdatainstance; \Cohorttime(\xdatainstance)) - Y_{\lastcheckpointtimesmall}(\xdatainstance; \notseen)\mid \Cohorttime(\xdatainstance) \neq \notseen\right]
             & \mathcomment{Linearity of expectations}                                                                                                                                                                               \\
             & = \memarchitecture
        \end{align}
    \end{subequations}
    We now prove the estimator above is unbiased:
    \begin{subequations}
        \begin{align}
            \memorisationarchitectureestimator
             & = \expect_{\paramswithx, \paramswithoutx}\left[ \frac{1}{|\paramswithx|} \sum_{\vtheta \in \paramswithx} Y_{\lastcheckpointtimesmall}(\xdatainstance)  - \frac{1}{|\paramswithoutx|} \sum_{\vtheta \in \paramswithoutx} Y_{\lastcheckpointtimesmall}(\xdatainstance) \right]
            \\
             & = \expect_{\paramswithx}\left[ \frac{1}{|\paramswithx|} \sum_{\vtheta \in \paramswithx} Y_{\lastcheckpointtimesmall}(\xdatainstance) \right]
            - \expect_{\paramswithoutx}\left[ \frac{1}{|\paramswithoutx|} \sum_{\vtheta \in \paramswithoutx} Y_{\lastcheckpointtimesmall}(\xdatainstance) \right]                                                                                                                           \\
             & = \expect_{\trainingvar}\left[ Y_{\lastcheckpointtimesmall}(\xdatainstance; \Cohorttime(\xdatainstance)) \mid \Cohorttime(\xdatainstance) \neq \notseen \right]
            - \expect_{\trainingvar}\left[ Y_{\lastcheckpointtimesmall}(\xdatainstance; \notseen) \mid \Cohorttime(\xdatainstance) = \notseen\right]                                                                                                                                        \\
             & = \memorisationarchitecture
        \end{align}
    \end{subequations}
    This completes the proof.
\end{proof}

\subsection{Influence Functions}\label{app:influence_estimator}

As mentioned in \cref{sec:influence_functions}, influence functions approximate $\optvthetawithoutx$ using a first-order Taylor expansion of the training objective around $\optvthetawithx$.
This should lead to small errors under the following assumptions:
(i) the loss function is strictly convex in $\vtheta$, (ii) $\hessian_{\vtheta}$ is a positive-definite matrix, and (iii) the model has converged \citep{koh-liang-2017-understanding}.
We make these assumptions explicit now.

\begin{assumption}[Strict Convexity] \label{ass:convexity}
    The loss function $\loss$ is strictly convex with respect to the parameters $\vtheta$.
\end{assumption}

\begin{assumption}[Local Optimality] \label{ass:optimality}
    The parameters $\optvthetawithx$ locally minimise the loss function $\loss$, meaning that the
    $\hessian_{\vtheta}$ is a positive-definite matrix and that gradient of the loss with respect to the parameters $\optvthetawithx$ is zero.
\end{assumption}

Given these assumptions, we can estimate the counterfactual term in $\memorisationinstance$ (in \cref{eq:instance_memorisation}) by computing the performance using the updated parameters $\optvthetawithoutx$.
As mentioned in the main text, we thus define $\Ywithoutx(\xdatainstance) = \perffn(\optvthetawithoutx; \xdatainstance)$ and equate $Y_{\lastcheckpointtimesmall}(\xdatainstance; \notseen) = \Ywithoutx(\xdatainstance)$.

\begin{estimator}
    The \defn{influence function estimator}, defined as:
    \begin{align}
        \memorisationinfluenceestimator = Y_{\lastcheckpointtimesmall}(\xdatainstance) - \Ywithoutx(\xdatainstance)
    \end{align}
    is an unbiased estimator of $\memorisationinstancelastcheckpoint$ under \cref{ass:convexity,ass:optimality}.
\end{estimator}
\vspace{-6pt}
\begin{proof}
    See \citet{cook1980characterizations} or \citet{koh-liang-2017-understanding} for derivations of how $\Ywithoutx(\xdatainstance)$ approximates the counterfactual $Y_{\lastcheckpointtimesmall}(\xdatainstance; \notseen)$ under the assumptions above. The estimator then follows trivially from replacing $Y_{\lastcheckpointtimesmall}(\xdatainstance; \notseen)$ in \cref{eq:instance_memorisation}.
\end{proof}

\subsection{Extractable Memorisation} \label{app:extractable_estimator}

As mentioned in \cref{sec:extractable_memorisation}, extractable memorisation assumes zero-valued counterfactual performances $Y_{\checkpointtime}(\xdatainstance; \notseen) = 0$.
We formalise this assumption, and the associated statistical estimand and estimator in this section.

\begin{assumption}[Negligible counterfactual] \label{ass:negligible_counterfactual}
    In the absence of training, performance on a string should be zero: $Y_{\checkpointtime}(\xdatainstance; \notseen) = 0$.
\end{assumption}

Given this assumption, we can trivially identify counterfactual memorisation as being equivalent to the statistical estimand: $\klestimand = Y_{\checkpointtime}(\xdatainstance; \cohorttime)$.
We can now define the \kl-extractable memorisation estimator under our framework.
\begin{estimator}
    The \defn{$\boldsymbol{(k,\ell)}$-extractable memorisation} estimator, defined as:
    \begin{align}
        \klestimator =
        Y_{\checkpointtime}(\xdatainstance)
    \end{align}
    is an unbiased estimator of $\memorisationinstance$ under \cref{ass:negligible_counterfactual}.
\end{estimator}
\vspace{-6pt}
\begin{proof}
    This follows trivially from replacing $Y_{\checkpointtime}(\xdatainstance; \notseen)$ with 0 in \cref{eq:instance_memorisation}.
\end{proof}

% ======================
% Implementation Details
% ======================
\section{Implementation Details}\label{app:implementation_details}

We implement all experiments using the \makesf{PyTorch} framework \citep{paszke-etal-2019-pytorch}.
We use the Pythia models as available through the \makesf{transformers} library \citep{wolf-etal-2020-transformers}.
For a consistent evaluation between scales, we load every model using \makesf{bfloat16} precision, which is needed for the larger versions. We control randomness using  \makesf{CUDA} deterministic operations and seeding the pseudo-random number generators at every level of the stack and for each multi-processing worker.
We use the implementation of the \citet{callaway-santanna-2021-difference} estimator as available in the \makesf{differences} library\footnote{\href{https://github.com/bernardodionisi/differences}{\makesf{github.com/bernardodionisi/differences}}.}.

\subsection{The Pythia Suite}

We use the publicly available Pythia model suite \citep{biderman-etal-2023-pythia}, which was trained on the Pile \citep{gao-etal-2020-pile, biderman-etal-2022-datasheet}. Both the preprocessed training data and intermediate checkpoints are publicly available.\footnote{\href{https://github.com/EleutherAI/pythia}{\makesf{github.com/EleutherAI/pythia}}.}

\paragraph{Data.}
The Pile is a \q{300}{\billion}-token curated collection of English documents.
The deduplicated version of the dataset is obtained by applying a near-deduplication method based on \makesf{MinHashLSH} and has \q{207}{\billion} tokens.
Before being used for training, the dataset is shuffled, tokenised, and \enquote{packed} into sequences of \integer{2049} tokens with no end-of-document token.\footnote{\href{https://github.com/EleutherAI/pythia/issues/123}{\makesf{github.com/EleutherAI/pythia/issues/123}}.}
By design, each sequence can pack multiple documents and tokens can attend across document boundaries.
Noticeably, the packing process implies that the second half-epoch of deduplicated data contains the same documents but not necessarily the same sequences.
There does not exist an official validation set for Pythia models. However, we confirmed with the authors that the original Pile validation set has not been used for training.

\paragraph{Models.}
The Pythia model suite is composed of 16 models: transformers of \integer{8} different sizes trained on the Pile as-is or deduplicated.
All model sizes were trained using a cosine learning rate schedule with warm-up, the same data order, and a batch size of \integer{1024} sequences, resulting in exactly \q{143}{\thousand} optimisation steps.
The final \q{48}{\thousand} optimisation steps correspond to the second half-epoch.
Thus, we focus on model checkpoints at initialisation (step \integer{0}), and after every \q{1}{\thousand} iterations (steps \q{1}{\thousand}-\q{95}{\thousand}) resulting in \integer{96} checkpoints evenly spaced throughout training. For completeness, we report the second half-epoch (steps \q{96}{\thousand}-\q{143}{\thousand}) analysis in \cref{app:additional_plots}. Additionally, log-spaced checkpoints are available for timesteps early in training (timesteps $\checkpointtime \in \{2^i\}_{i=0}^{9}$).
We do not consider them to obtain evaluations at evenly spaced intervals.
We use all available model sizes, that is, \q{70}{\million}, \q{160}{\million}, \q{410}{\million}, \q{1.4}{\billion}, \q{6.9}{\billion}, and \q{12}{\billion}, except \q{2.8}{\billion}. We exclude \q{2.8}{\billion} from the results since we found a potential mismatch between the available checkpoints and the data order used during training.

\subsection{Hardware Details}

We use a server with one \makesf{NVIDIA A100 80GB PCIe}, \integer{32} CPUs, and \integer{32} GB of RAM for all experiments. Below, we report a subset of the output of the \makesf{lscpu} command:

\begin{tcolorbox}[left=5pt,right=5pt,top=5pt,bottom=5pt]
    \small
    \begin{verbatim}
Architecture:        x86_64
CPU op-mode(s):      32-bit, 64-bit
Address sizes:       46 bits physical, 
                     48 bits virtual
Byte Order:          Little Endian
CPU(s):              32
On-line CPU(s) list: 0-31
Vendor ID:           GenuineIntel
Model name:          Intel(R) Xeon(R)
                     Silver 4210R CPU
                     @ 2.40GHz
CPU family:          6
Model:               85
Thread(s) per core:  1
Core(s) per socket:  1
Socket(s):           8
Stepping:            7
BogoMIPS:            4800.11
\end{verbatim}
\end{tcolorbox}

% ==================
% Additional Results
% ==================
\section{Additional Results}\label{app:additional_plots}

On the next page in \cref{fig:other_metrics_full}, we report the memorisation profiles obtained using other metrics besides sequence-level log-likelihood. Specifically, the average token-level accuracy given the true context and the average rank assigned by the model to the correct next token given the true context.
We report the results for the entire training process---i.e., using all the available checkpoints: $\checkpointtime\in\{\integer{0}, \q{1}{\thousand}, \smalldots, \q{143}{\thousand}\}$ and $\cohorttime\in\{\q{1}{\thousand}, \smalldots, \q{143}{\thousand}\}$.
We present the metrics from most coarse---i.e., average token accuracy (\cref{subfig:average_token_accuracy})---to most fine-grained---i.e., sequence log-likelihood (\cref{subfig:sequence_loglikelihood}).

As shown in \cref{fig:other_metrics_full}, different performance metrics result in distinct memorisation estimates. Specifically, finer-grained metrics---like sequence log-likelihood---allow us to detect smaller memorisation effects, and vice versa.
For example, average token accuracy, which is the most coarse-grained metric, mostly does not capture instantaneous memorisation for Pythia \q{410}{\million}. Instead, a finer-grained metric---like average token rank or sequence log-likelihood---detects additional effects.
Depending on the use-case different metrics might be appropriate.
For example, analogously to extractable memorisation \citep{carlini-etal-2021-extracting}, average token accuracy could be used to measure memorisation as it matches the specific use-case: detecting whether a model would generate a specific sequence when prompted with some of its tokens.
We chose sequence log-likelihood because it allows us to capture more fine-grained memorisation effects beyond the capability of the model to generate a specific sequence.
In particular, accuracy captures \enquote{hard} transitions in the model's output by determining whether a token is the most likely in the model's output distribution.
Log-likelihood, on the other hand, captures more nuanced impacts of training on an instance by assessing whether a token is more likely to be generated than it would be otherwise.

\begin{figure}[htbp]
    \centering
    \subfloat[\textbf{Average Token Accuracy:},
    $\perffn(\vtheta_{\checkpointtime}, \xdatainstance) = \frac{1}{\size{\xdatainstance}} \sum_{i=1}^{\size{\xdatainstance}}\mathbbm{1}(\hat{x}_i = {\color{\instancecolor} x}_i)$, where $\hat{x}_i = \argmax_{x\in\vocab} \pthetacheckpoint(x\mid\xdatainstance_{<i})$ is the predicted token at position $i$ computed using the correct previous tokens as context and $\size{\xdatainstance}$ is the number of tokens in the sequence. \label{subfig:average_token_accuracy}]{\includegraphics[width=\columnwidth]{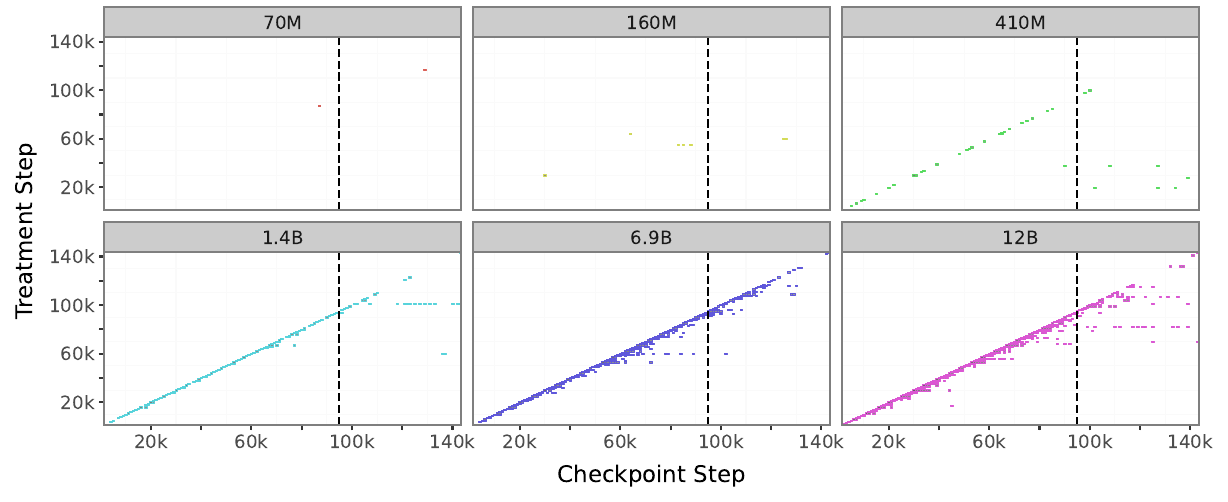}} \\
    \subfloat[\textbf{Average Rank of the True Token:}  $\perffn(\vtheta_{\checkpointtime}, \xdatainstance) = \frac{1}{\size{\xdatainstance}} \sum_{i=1}^{\size{\xdatainstance}} \mathrm{rank}({\color{\instancecolor} x}_i)$, where the function $\mathrm{rank}(\cdot)$ returns the rank of the true token at position $i$ computed from the probabilities assigned by the model using the correct previous tokens as context, i.e. $\pthetacheckpoint(x\mid\xdatainstance_{<i})$, and $\size{\xdatainstance}$ is the number of tokens in the sequence.]{\includegraphics[width=\columnwidth]{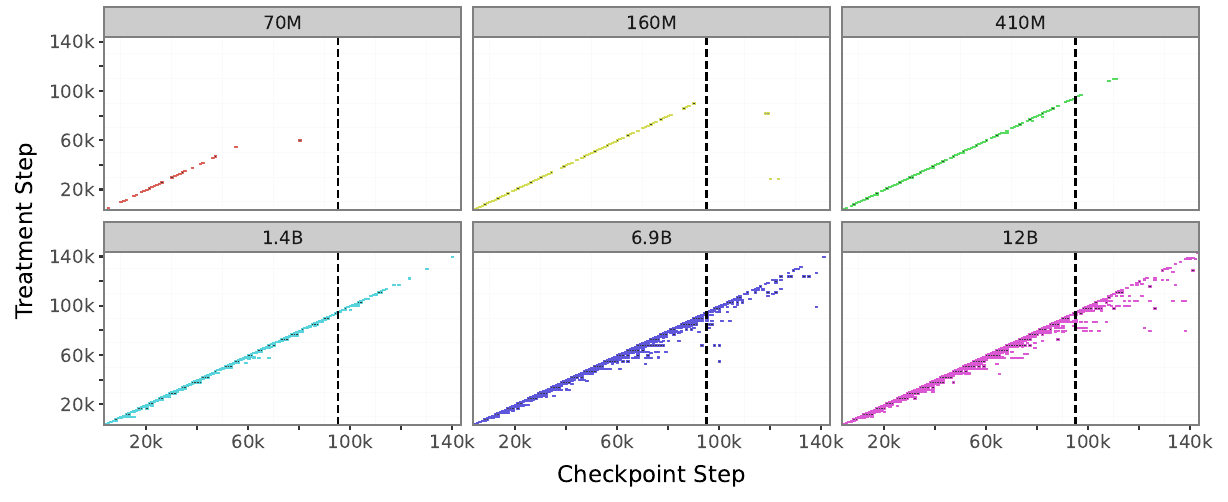}} \\
    \subfloat[\textbf{Sequence Log-Likelihood:}  $\perffn(\vtheta_{\checkpointtime}, \xdatainstance) = \log \pthetacheckpoint(\xdatainstance)$, where $\log \pthetacheckpoint(\xdatainstance) = \sum_{i=1}^{\size{\xdatainstance}} \log \pthetacheckpoint({\color{\instancecolor} x}_i\mid\xdatainstance_{<i})$ and $\size{\xdatainstance}$ is the number of tokens in the sequence. \label{subfig:sequence_loglikelihood}]{\includegraphics[width=\columnwidth]{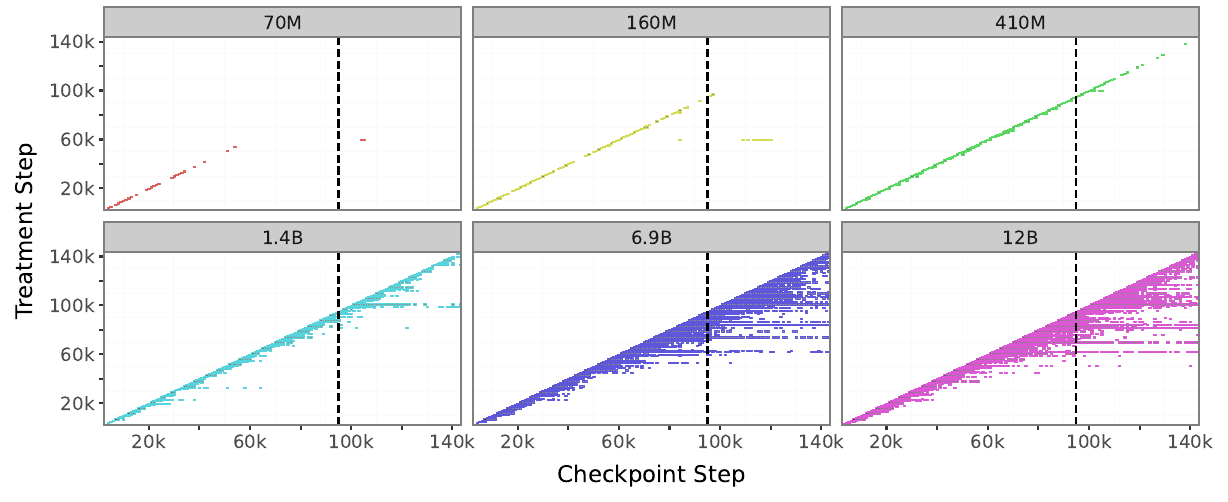}} \\
    \caption{Memorisation profiles ($\memorisationcohort$) computed using different performance metrics $\perffn$ using all the available checkpoints---i.e., $\checkpointtime\in\{\integer{0}, \q{1}{\thousand}, \smalldots, \q{143}{\thousand}\}$ and $\cohorttime\in\{\q{1}{\thousand}, \smalldots, \q{143}{\thousand}\}$. The dashed vertical line indicates the end of the first epoch ($\checkpointtime\mathop{=}\q{95}{\thousand}$). We only show statistically significant entries. \label{fig:other_metrics_full}}
\end{figure}

\end{document}